\renewcommand\AB@affilsepx{, \protect\Affilfont}
\newcommand{\cmark}{\ding{51}}
\newcommand{\xmark}{\ding{55}}
\definecolor{flodarkpurple}{rgb}{0.288,0.1196,0.7}
\newcommand{\authorhref}[3][flodarkpurple]{\href{#2}{\color{#1}{#3}}}%
\newcommand{\taskography}{\textsc{Taskography}}
\newcommand{\taskographyapi}{\textsc{Taskography-API}}
\newcommand{\scrub}{\textsc{scrub}}
\newcommand{\scrubbed}{\textsc{scrubbed}}
\newcommand{\seek}{\textsc{seek}}
\newcommand{\sg}{\textsc{3DSG}}
\newcommand{\sgs}{\textsc{3DSG}s}
\newcommand{\graytext}[1]{\textcolor{flodarkpurple}{#1}}
\newtheorem{definition}{Definition}
\newtheorem{domain}{Domain}
\newtheorem{proposition}{Proposition}
\title{\taskography{}: Evaluating robot task planning over large 3D scene graphs \\ \vspace{0.2cm} \large{Project page: \authorhref{https://taskography.github.io}{https://taskography.github.io}} \vspace{-.6cm}}
\author[1]{\authorhref{http://agiachris.github.io/}{Christopher Agia}\thanks{Authors contributed equally. Order determined by academic juniority.}\ \ }
\author[2]{\authorhref{https://krrish94.github.io}{Krishna Murthy Jatavallabhula}$^*$}
\author[1]{\authorhref{https://www.linkedin.com/in/khodeir/?originalSubdomain=ca}{Mohamed Khodeir}}
\author[3]{\authorhref{http://miksik.co.uk/}{Ondrej Miksik}}
\author[3]{\authorhref{http://vibhavvineet.info/}{Vibhav Vineet}}
\author[4]{\authorhref{http://www.mustafamukadam.com/}{Mustafa Mukadam}}
\author[2]{\authorhref{http://liampaull.ca}{Liam Paull}}
\author[1,5]{\authorhref{http://www.cs.toronto.edu/\~florian/}{Florian Shkurti}}
\affil[1]{\href{https://www.utoronto.ca/}{University of Toronto}}
\affil[2]{\href{https://montrealrobotics.ca}{Montreal Robotics and Embodied AI Lab}, \href{https://mila.quebec/en}{Mila}, \href{https://umontreal.ca}{Universit\'e de Montr\'eal}}
\affil[3]{\href{https://www.microsoft.com/en-us/research/lab/mixed-reality-ai-zurich/}{Microsoft}}
\affil[4]{\href{https://mcgill.ca}{Facebook AI Research}}
\affil[5]{\href{https://vectorinstitute.ai/}{Vector Institute}\vspace{-1cm}}
\begin{document}
\maketitle

\begin{abstract}
    3D scene graphs (\sgs{}) are an emerging description; unifying symbolic, topological, and metric scene representations.  %
    However, typical \sgs{} contain hundreds of objects and symbols even for small environments; rendering task planning on the \emph{full} graph impractical.
    We construct \textbf{\taskography{}}, the first large-scale robotic task planning benchmark over \sgs{}.
    While most benchmarking efforts in this area focus on \emph{vision-based planning}, we systematically study \emph{symbolic} planning, to decouple planning performance from visual representation learning.
    We observe that, among existing methods, neither classical nor learning-based planners are capable of real-time planning over \emph{full} \sgs{}.
    Enabling real-time planning demands progress on \emph{both} (a) sparsifying \sgs{} for tractable planning and (b) designing planners that better exploit \sg{} hierarchies.
    Towards the former goal, we propose \scrub{}, a task-conditioned \sg{} sparsification method; enabling classical planners to match and in some cases surpass state-of-the-art learning-based planners.
    Towards the latter goal, we propose \seek{}, a procedure enabling learning-based planners to exploit \sg{} structure, reducing the number of replanning queries required by current best approaches by an order of magnitude.
    We will open-source all code and baselines to spur further research along the intersections of robot task planning, learning and \sgs{}.

\end{abstract}

\keywords{Robot task planning, 3D scene graphs, learning to plan, benchmarks}

\vspace{-.15cm}
\section{Introduction}
\label{sec:introduction}
\vspace{-.15cm}

Real-world robotic task planning problems in large environments require reasoning over tens of thousands of object-action pairs.
Faced with long-horizon tasks and an abundance of choices, state-of-the-art task planners struggle with an efficiency-reliability trade-off in grounding actions towards the goal.
Hence, designing actionable scene abstractions suitable for a range of robotic tasks has drawn long-standing attention from the robotics and computer vision communities~\cite{spatial-semantic-hierarchy, towards-topological-maps, multi-hierarchical-maps, conceptual-spatial-repr, multiversal-semantic-maps, kimera}. 

A promising approach for building symbolic abstractions from raw perception data are 3D scene graphs (\sgs{}, see Fig.~\ref{fig:scenegraph})~\cite{3dscenegraph, 3ddynamicscenegraphs, 3dscenegraph-cybernetics} -- hierarchical representations of a scene that capture metric, semantic, and relational information, such as affordances, properties, and relationships among scene entities.
While \sgs{} have to date been applied to simpler planning problems like goal-directed navigation~\cite{kimera, ravichandran2021hierarchical}, active object search~\cite{hms}, and node classification~\cite{neuraltrees}, their amenability to more complex robotic task planning problems has yet to be thoroughly evaluated.

To investigate the joint application of \sgs{} and modern task planners to complex robotics tasks we propose \textbf{\taskography{}}: the first large-scale benchmark comprising a number of challenging task planning domains designed for \sgs{}.
Analyzing planning times and costs on a diversity of domains in \taskography{} reveals that neither classical nor learning-based planners are capable of real-time planning over full \sgs{}, however, that they become so only when \sgs{} are sparsified.

Many real-world problems only require reasoning over a small subset of scene objects.
E.g., the task ``\emph{fetch a mug from the kitchen}'' primarily involves reasoning about scene elements associated with mugs or kitchens, rendering a vast majority of the remaining environment contextually irrelevant.
Most planners aren't able to exploit such implicitly defined task contexts, instead spending most of their computation time reasoning about extraneous scene attributes and actions~\cite{ploi} (see Fig.~\ref{fig:scrub-domain-stats}).

We argue that performant task planning over \sgs{} demands progress on two fronts: (a) sparsifying \sgs{} to make planning problems tractable, and (b) designing task planners that exploit the spatial hierarchies encapsulated in \sgs{}.
To address (a), we present \textbf{\scrub{}}--a planner-agnostic strategy guaranteed to produce a minimal \emph{sufficient} object set for grounded planning problems.
That is, planning on this reduced subset of scene entities suffices to solve the planning problem defined over the full \sg{}.
Classical planning over state spaces (\sgs{}) augmented by \scrub{} outperforms state-of-the-art learning-based planners on the majority of tasks on our benchmark, without requiring any prior learning, establishing a strong baseline for future work in robotic task planning.
To address (b), we present \textbf{\seek{}}: a procedure tailored to \sgs{}, which supplements learning-based incremental planners by imposing \sg{} structure, ensuring all objects in the \emph{sufficient} set are reachable from the start state.
In our experiments, augmenting state-of-the-art planners with \seek{} results in computational savings and an order of magnitude fewer replanning iterations.

In summary, we make the following contributions:
\begin{compactitem}
\item \taskography{}: a large-scale benchmark to evaluate robotic task planning over \sgs{},
\item \scrub{}: a planner-agnostic strategy to adapt \sgs{} for performant planning,
\item \seek{}: a procedure that enables learning-based planners to better exploit \sgs{}
\end{compactitem}

We will open-source all code and baselines in \taskographyapi{}, enabling the construction of new task planning domains, and benchmarking the performance of newer learning-based planners. %

\vspace{-.5cm}
\section{Related work}
\label{sec:relatedwork}

Early research in \textbf{symbolic planning} was centered around \emph{optimal} planning~\cite{kautz1996pushing, kautz1999unifying, satplan, delfi, mcts-survey}; planners producing solutions that preserve cost or plan length optimality.
These methods are computationally expensive and thereby untenable to even moderately sized problems.
This spurred work on \emph{satisficing} planners that forgo optimal solutions for cheaper, feasible plans.
Notable paradigms include regression planning~\cite{bonet1999planning}, tree search~\cite{kaindl1990tree}, and heuristic search~\cite{hsp, ff, fd, cerberus, decstar}. 
Whilst the many successes of heuristic planners~\cite{ipc3, ipc2014}, computing low-cost informative heuristics is deterred by many extraneous objects~\cite{chitnis2020camps, ploi}; an inauspicious characteristic of large \sgs{}.

\textbf{Robot task planning} techniques have focused on constructing more effective representations to plan upon~\cite{galindo2004improving, prm, task-planning-semantic-maps}.
There are also approaches that integrate task and motion planning~\cite{kaelbling2013integrated, tamp-scorespace, garrett2021integrated}--further demonstrated in hierarchical task space~\cite{tamp-hierarchical}--but which fall outside the scope of our work. 
Several approaches exploit task hierarchies for robot task planning~\cite{stock2015online} and control~\cite{gonzalez2017three, beetz2012cognition, learning-skill-trees}. 
Different from these, our work focuses on exploiting abstractions in \emph{spatial structure} encapsulated in \sgs{}, not to be conflated with hierarchical planners that exploit \emph{task structure}~\cite{bercher2019survey}.

State-of-the-art \textbf{learning-based} planners have demonstrated promising performance in small-to-moderate problem sizes. However, techniques such as relational policy learning~\cite{relationalpolicy}, relational heuristic learning~\cite{relational-heuristic}, action grounding~\cite{gnad2019learning}, program guided symbolic planning~\cite{oh2017zero,andreas2017modular,denil2017programmable,sun2019program,yang2021program}, and regression planning networks~\cite{regression-planning-networks} fail in large problem instances with branching factors and operators of the order considered (see Fig. \ref{fig:taskography-benchmark}) in the \taskography{} benchmark. 
Moreover, several planners that learn to search~\cite{mcts-nets, qmdp-net, karkus2018integrating, guez2019investigation} depend on hard-to-obtain dense rewards or do not scale with domain complexity~\cite{neuraltaskgraphs,neuraltaskprogramming,mwalk}.

The simplification of planning problems via \textbf{pruning strategies} to enable efficient search has been explored in both propositional~\cite{fivser2020lifted, fivser2020strengthening, gnad2019learning} and numeric~\cite{task-scoping} planning contexts. 
Among these, PLOI~\cite{ploi} is a particularly performant learning-based approach that leverages object-centric relational reasoning~\cite{battaglia2018relational, kipf2018neural, kipf2019contrastive, locatello2020object} to score and prune \emph{extraneous objects} to the task.
While PLOI outperforms existing classical planners on the \taskography{} benchmark, it incurs a large number of replanning steps owing to inaccurate neural network predictions; and inability to exploit \sg{} hierarchies.
Our proposed \seek{} procedure decreases replanning steps by two orders of magnitude.

\textbf{Planning benchmarks} in the symbolic planning communities have featured a variety of tasks with time complexities ranging from polynomial (e.g., shortest-path) to NP-hard problems (e.g., traveling salesman). 
There also exists a handful of environments~\cite{ai2thor, alfred, puig2018virtualhome, habitat, weihs2021visual} for benchmarking learned action policies from language directives and ego-centric visual observations, task and motion planning~\cite{gan2021threedworld}, or the modelling of physical interactions~\cite{igibson, sapien}.
Another recent benchmark~\cite{robot-task-planning-benchmark-2019} only supports navigation and block-stacking tasks.
However, there isn't currently a large-scale benchmark tailored to robotic task planning in \sgs{} with several hundreds of objects.

\section{Background}
\label{sec:background}

\textbf{Task planning.} A task planning problem $\Pi$ is a tuple $\langle \cal{O}, \cal{P}, \cal{A}, \cal{T}, \cal{C}, \cal{I}, \cal{G} \rangle$. As a running example, consider the task \graytext{find an apple, slice it, and place it on the counter}. $\cal{O}$ is the set of all ground objects (instances) in the problem. %
$\cal{P}$ is a set of properties, each defined over one or more objects; \graytext{weight(apple) = 70 grams}. \textbf{Predicates} are subclasses of properties in that they are boolean-valued; \graytext{canPlace(apple, refrigerator) = True}. %
$\cal{A}$ is a finite set of lifted actions operating over object tuples; \graytext{slice(apple)}, \graytext{place(apple, counter)}.
$\cal{T}$ is a transition model and $\cal{C}$ denotes state transition costs.
$\cal{I}$ and $\cal{G}$ are initial and goal states. 
A state is an assignment of values to all possible properties grounded over objects. 
For the running example, a goal state may be specified as \graytext{on(apple, counter)=True and sliced(apple)=True}.
Planning problems may be grounded--\graytext{slice \emph{this} apple}, or lifted--\graytext{slice \emph{an} apple}.

\begin{wrapfigure}{R}{.5\linewidth}
\vspace{-.5cm}
\centering
\includegraphics[width=\linewidth]{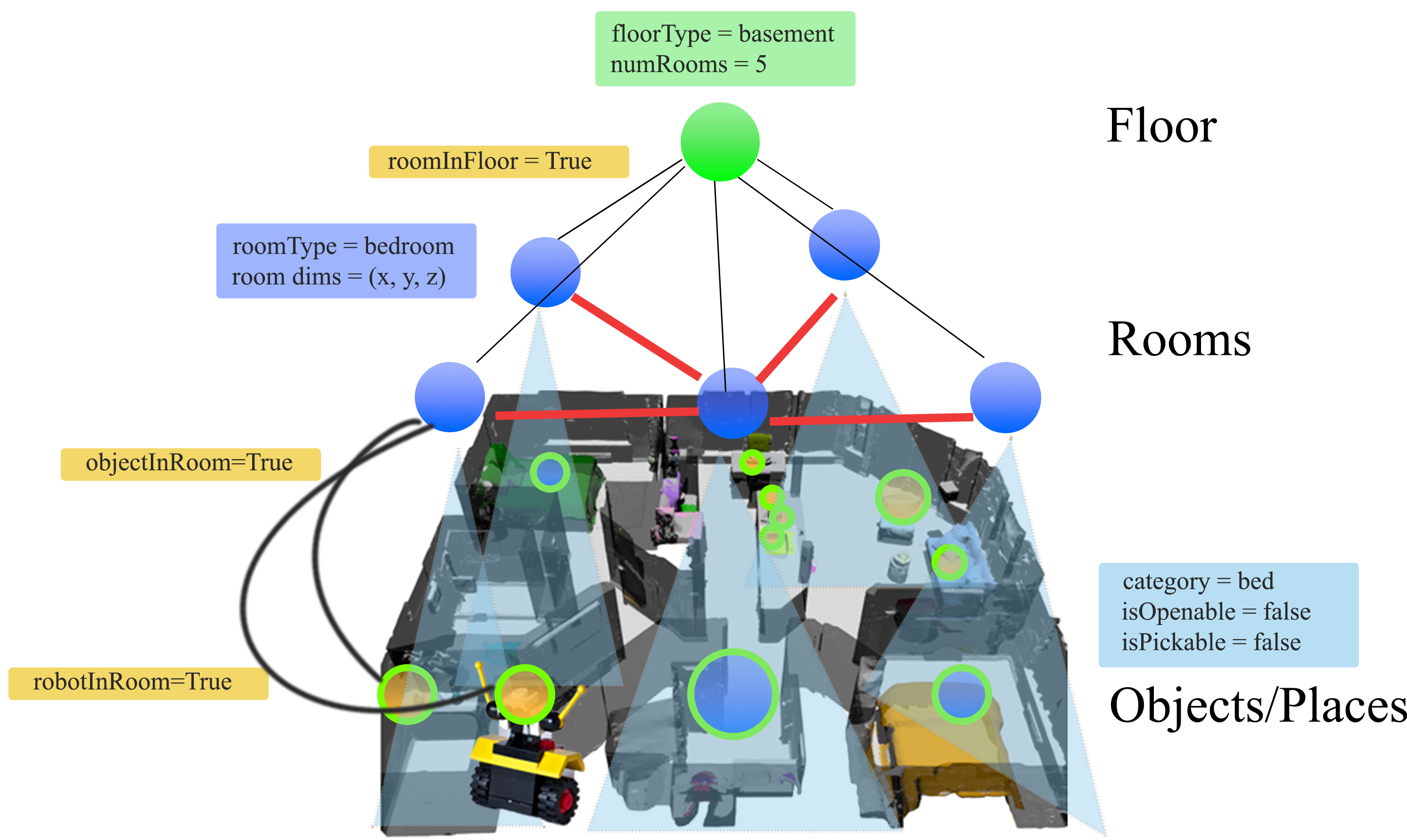}
\caption{\small A \emph{state} in a planning problem specified over a \sg{}. Nodes are scene entities and store unary predicates. Edges indicate binary predicates (relations). A goal is a conjunction of unary and binary literals. We only show a subset of relations for brevity. E.g., if the robot executes an action that moves it to another room, the \graytext{robotInRoom} relation shown in this figure will be set to \graytext{False} for the room on the lower left.}
\label{fig:scenegraph}
\vspace{-.3cm}
\end{wrapfigure}

\textbf{3D scene graphs (\sgs{}).} A \sg{}~\cite{3dscenegraph, 3ddynamicscenegraphs} is a hierarchical multigraph $G = (V, E)$ with $k \in \{1 \cdots K\}$ levels, where $V^k \in V$ denotes the set of vertices at level $k$.
Edges originating from a vertex $v \in V^k$ may only terminate in $V^{k-1} \cup V^k \cup V^{k+1}$ (i.e., edges connect nodes within one level of each other).
Each \sg{} in our work comprises at least 5 levels with increasing spatial precision as we move down the hierarchy: the topmost level in the hierarchy is a root node representing a scene. This node branches out to indicate the various \emph{floors} in the building, which in turn branches out to denote various \emph{rooms} in a floor, and subsequently \emph{places} within a room.
A place is a collection of \emph{objects}, which may themselves contain other \emph{objects} (to allow for container types such as cabinets and refrigerators).\footnote{The lowest level in~\cite{3ddynamicscenegraphs} is a metric-semantic mesh. However since our focus is on symbolic planning, we only require scene graph levels that contain \emph{objects}.}
At each level, edges indicate various types of relations among nodes (e.g., at the room level, an edge indicates the existence of a traversible path between two rooms; at the object level, edges indicate multiple affordance relations).
Each node also stores semantic attributes such as node type, functionality, affordances, etc., following~\cite{3dscenegraph}.

\section{\taskography{}}
\label{sec:taskography}

We propose \taskography{}: the first large scale benchmark to evaluate symbolic planning over \sgs{}. Currently, \taskography{} comprises 20 challenging robotic task planning domains totaling 3734 tasks. %
Different from current benchmarks for embodied AI that focus primarily on egocentric \emph{visual} reasoning~\cite{alfred, shridhar2020alfworld, gan2020threedworld, ai2thor, sapien, habitat}; \taskography{} is designed to evaluate \emph{symbolic} reasoning over \sgs{}.
To emulate the complexity of real-world task planning problems, \taskography{} builds atop the Gibson~\cite{gibson} dataset comprising real-world scans of large building interiors (averaging 2-3 floors per building; 7 rooms per floor), and their corresponding \sgs{}~\cite{3dscenegraph}.

\textbf{Augmenting \sgs{} with plannable attributes.} A prerequisite for planning over \sgs{}---absent in existing work~\cite{3dscenegraph-cybernetics, 3dscenegraph, 3ddynamicscenegraphs}---is  a database of \emph{plannable attributes}: predicates, actions, and transition models.
To support task planning, we augment each \sg{} in Gibson~\cite{gibson} (tiny and medium splits) with several layers of additional unary and binary predicates.
For each \sg{} node, we obtain class labels, object dimensions and pose from~\cite{3dscenegraph}. 
We annotate object affordances by building a knowledge base of lifted object-action pairs and recursively applying it to every \sg{} node, while accounting for exceptions (objects that are concealed or contained within others).
We also detect \emph{door} objects in the \sg{} and use this to add additional edges describing room connectivity.
We annotate objects with all possible properties in our planning domains (e.g., ``\emph{is this object typically a receptacle?}''). Our rich property set (\emph{plannable attributes}) is chosen to support a wide range of realistic-robotic tasks geared towards large (building-scale) \sgs{}.

\begin{figure}[!htb]
    \centering
    \begin{minipage}{.5\textwidth}
        \centering
        \includegraphics[width=.85\linewidth]{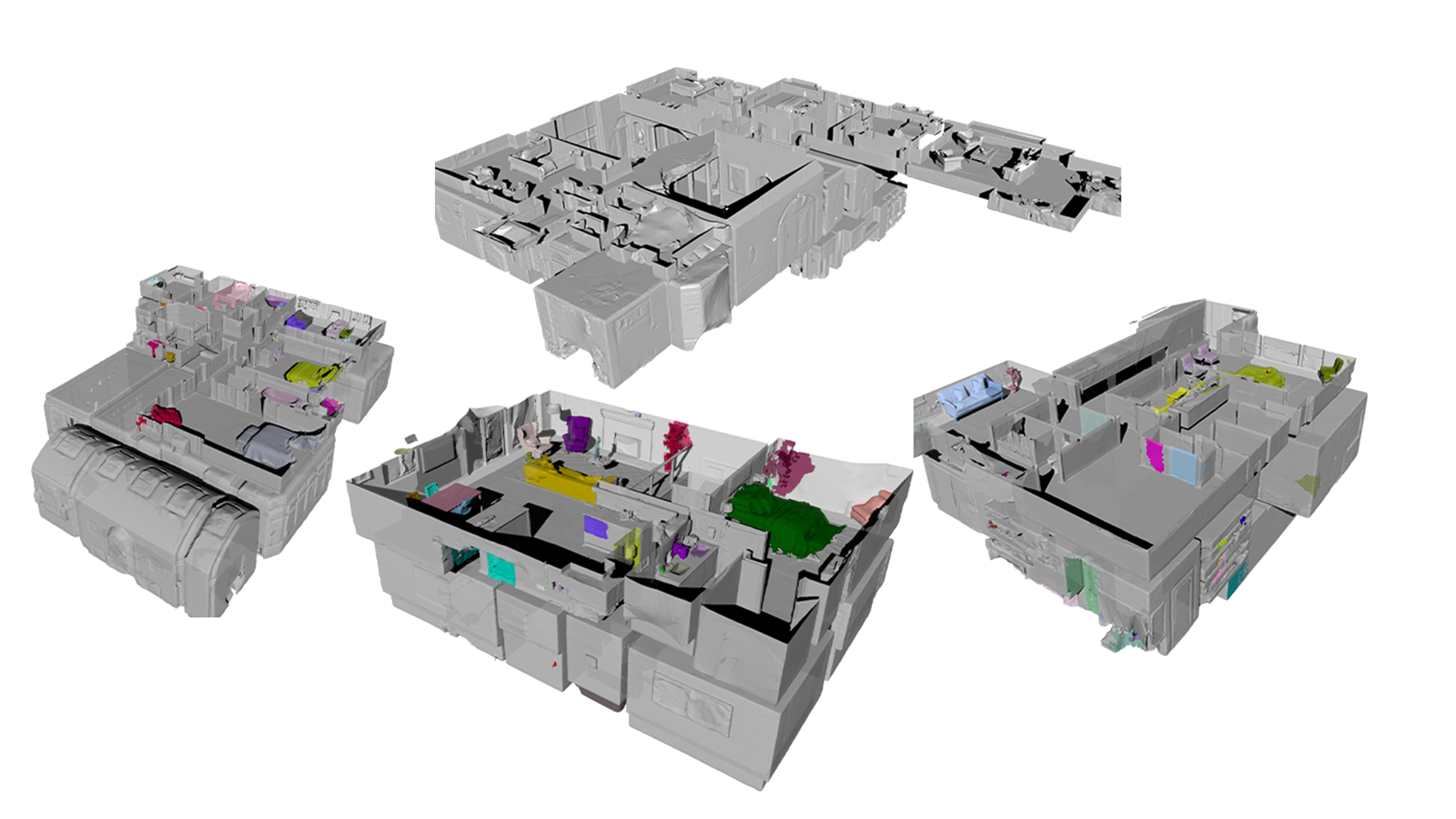}
    \end{minipage}
    \begin{minipage}{.245\textwidth}
        \centering
        \includegraphics[width=\linewidth]{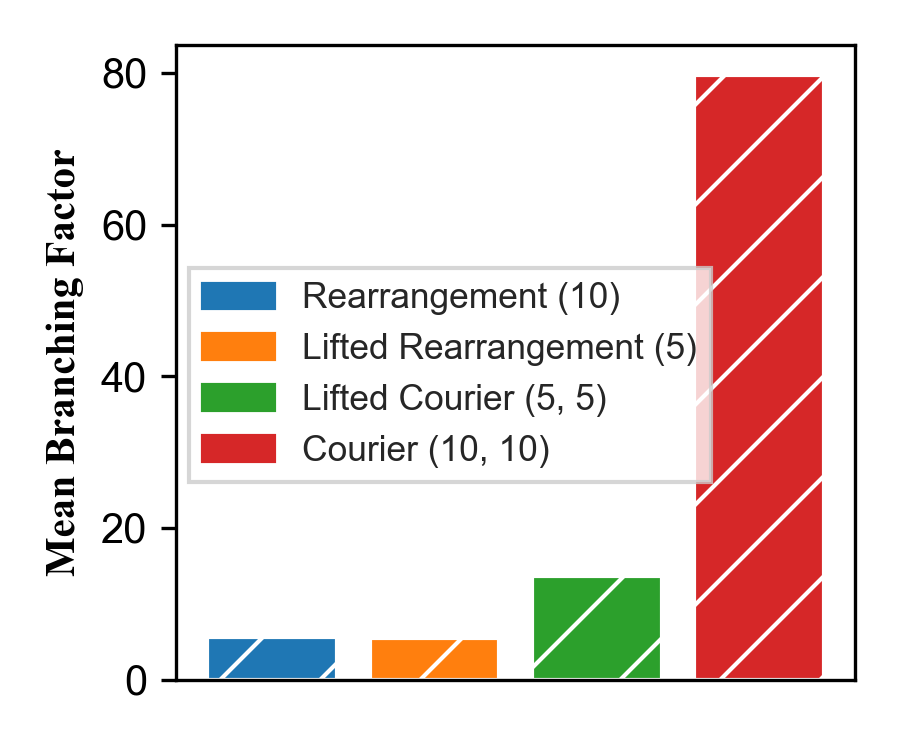}
    \end{minipage}
    \begin{minipage}{.24\textwidth}
        \centering
        \includegraphics[width=\linewidth]{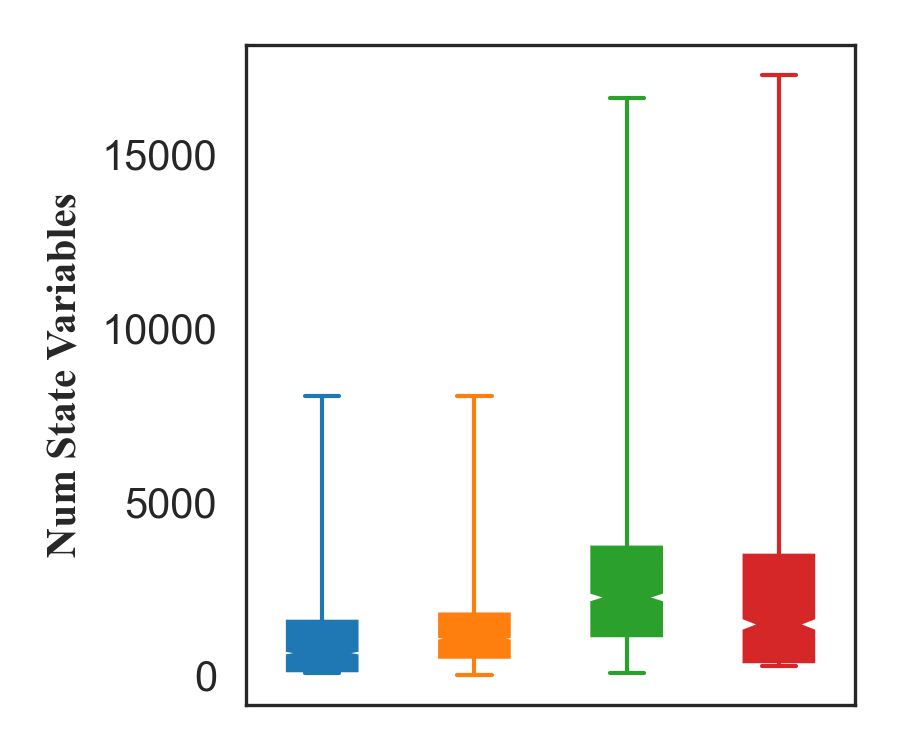}
    \end{minipage}
    \caption{The \textbf{\taskography{} benchmark} comprises large-scale planning problems defined over buildings from the Gibson dataset~\cite{gibson}. (Left) Representative buildings from Gibson~\cite{gibson}. (Middle/Right) We feature a variety of problem classes ranging in scale and complexity as illustrated by the domain statistics.}
    \label{fig:taskography-benchmark}
\end{figure}

\textbf{Benchmark statistics.}
Each of the 20 \taskography{} domains specifies a class of planning problems that resemble real-world use cases (and theoretically complex extensions) that a robot would encounter in office, house, or building scale environments.
These domains range from grounded planning domains to lifted planning domains, domains with no extraneous objects to domains where most objects are extraneous, and domains for which polynomial time solutions exist to NP-hard problems. The simplest domains in the benchmark have 1000 state variables and an average branching factor of 5; for hard domains, these are 4000 and 60 respectively (see Fig.~\ref{fig:taskography-benchmark}).

\textbf{\taskographyapi{}.} Our project page (\authorhref{https://taskography.github.io}{https://taskography.github.io}) will host code and data used in this work. In \taskographyapi{}, an open-source python package, we provide access to 18 classical and learning-based symbolic planners, templates to implement novel domains, and methods to generate problem instances of varying complexities and train/evaluate learning-based planners.

\textbf{Planners considered.}
\taskography{} supports a comprehensive set of planners to facilitate standardized evaluation on novel domains. The following planners are available at the time of writing.
\begin{compactitem}
\item \textbf{Optimal planners}: Fast Downward (FD) with the \texttt{opt-lmcut} heuristic~\cite{fd}, SatPlan~\cite{satplan}, Delfi~\cite{delfi}, DecStar-optimal~\cite{decstar}, and Monte Carlo tree search.
\item \textbf{Satisficing planners}: Fast Forward (FF), FF with axioms (FF-X)~\cite{ff}, Fast Downward (FD) with the \texttt{lama-cut} heuristic~\cite{fd}, DecStar-satisficing~\cite{decstar}, Cerberus~\cite{cerberus}, Best First Width Search (BFWS) \cite{bfws}, and regression planning.
\item \textbf{Learning-based planners}: Relational policy learning~\cite{relationalpolicy}, Planning with learned object importance (PLOI)~\cite{ploi} (and variants -- see Sec.~\ref{sec:seek}).
\end{compactitem}

\textbf{General assumptions.} To facilitate evaluation of all of these classes of planners, the first edition of our benchmark only considers \emph{fully observable} tasks and \emph{discrete} state and action spaces.
All goal states are specified as \emph{conjunctions} of literals.
While we make no distinction between deterministic or stochastic transitions, all current experiments assume a \emph{closed world}, i.e., all possible lifted actions and effects are known apriori.

\subsection{Robot planning domains: Case studies}

The full \taskography{} benchmark comprises 20 domains.
We discuss the four task categories from which all domains are constructed that we believe to be interesting to a broad robotics audience.

\begin{domain}
\emph{Rearrangement($k$)}: Based on the recently proposed rearrangement challenge~\cite{rearrangement}, this task requires a robot randomly spawned to rearrange a set of $k$ objects of interest into $k$ corresponding receptacles. The robot often needs to execute multiple other actions along the way, such as opening/closing doors, navigating to goals, planning the sequence of objects to visit, etc.
\end{domain}
\begin{domain}
\emph{Courier ($n, k$)}: A robot that couriers objects is equipped with a knapsack of maximum payload capacity of $n$ units. The robot needs to locate and courier $k$ objects (of varying weights $w \in \{1, 2, 3\}$ units) to $k$ distinct delivery points.
The knapsack can be used to stow and retrieve items in random-access fashion; effectively embedding a combinatorial optimization problem into the task.
Stow and retrieve actions increase branching, necessitating far deeper searches.
\end{domain}
\vspace{-\parskip}
We also provide \emph{lifted} variants of these tasks. Here, goals are specified over desired object-receptacle class relations (e.g., ``put a cup on a table'') as opposed to over object instances (e.g., ``put this cup on the table''). 
These tasks introduce ambiguity in both the search of classical task-planners and learning-based techniques, which must now distinguish object instances of relevant classes.
\begin{domain}
\emph{Lifted Rearrangement ($k$)}: A lifted version of the rearrangement domain where the goals are specified at an object category level, as opposed to an instance level.
\end{domain}
\begin{domain}
\emph{Lifted Courier ($n$, $k$)}: A lifted version of the courier domain where the goals are specified at an object category level, as opposed to an instance level.
\end{domain}
\vspace{-\parskip}
To promote compatability with a range of planning systems~\cite{ipc2014, planutils}, we represent all tasks in PDDL format~\cite{pddl, pddlgym}. We also include mechanisms for translating tasks into alternative problem definition languages that are essential for some of our supported planners~\cite{satplan}.

\newcommand{\STAB}[1]{\begin{tabular}{@{}c@{}}#1\end{tabular}}

\begin{table}[]
    \centering
    \caption{\textbf{\textsc{Taskography}} benchmark results on select grounded and lifted \emph{Rearrangement} (\textbf{Rearr}) and \emph{Courier} (\textbf{Cour}) \sg{} domains. Planning times are reported in seconds and do not incorporate planner-specific domain translation times (factored into planning timeouts). A `-' indicates planning timeouts or failures (10 minutes for optimal planners, 30 seconds for all others). Results are aggregated over 10 random seeds.
    Optimal task planning is infeasible in larger problem instances or for more complex domains, while most satisficing planners are unable to achieve real-time performance. PLOI~\cite{ploi}, a recent learning-based planner consistently performs the best across all domains.}
    \adjustbox{max width=\linewidth}{
    \begin{tabular}{@{}llrrrrrrrrrrrrrrrrrr@{}}
    \toprule
    &  & \multicolumn{3}{c}{\textbf{Rearr(1)} Tiny} & \multicolumn{3}{c}{\textbf{Rearr(2)} Tiny} & \multicolumn{3}{c}{\textbf{Rearr(10)} Medium} & \multicolumn{3}{c}{\textbf{Cour(7, 10)} Medium} & \multicolumn{3}{c}{\textbf{Lifted Rearr(5)} Tiny} & \multicolumn{3}{c}{\textbf{Lifted Cour(5, 5)} Tiny} \\
    \cmidrule{3-5} \cmidrule{6-8} \cmidrule{9-11} \cmidrule{12-14} \cmidrule{15-18} \cmidrule{18-20}
     & \textbf{Planner}          &    \textbf{Len.} &    \textbf{Time} &   \textbf{Fail} &    \textbf{Len.} &    \textbf{Time} &   \textbf{Fail} &    \textbf{Len.} &    \textbf{Time} &   \textbf{Fail} &    \textbf{Len.} &    \textbf{Time} &   \textbf{Fail} &    \textbf{Len.} &    \textbf{Time} &   \textbf{Fail} &    \textbf{Len.} &    \textbf{Time} &   \textbf{Fail} \\
    \midrule
    \multirow{5}{*}{\STAB{\rotatebox[origin=c]{90}{optimal}}} & \textbf{FD-seq-opt-lmcut} &  15.77 &  24.81 &   0.04 &  \textbf{25.80}  & 104.47 &   0.55 &   -    &   -    &   1.00    &   -    &   -    &   1.00    &   -    &   -    &   1.00    &   -    &   -    &   1.00    \\
    & \textbf{SatPlan}          &  14.77 &  10.35 &   0.45 &  \underline{26.67} &   3.27 &   0.67 &   -    &   -    &   1.00    &   -    &   -    &   1.00    &   -    &   -    &   1.00    &   -    &   -    &   1.00    \\
    & \textbf{Delfi}            &  15.13 &   0.36 &   0.16 &  29.10  &  27.77 &   0.29 &   -    &   -    &   1.00    &   -    &   -    &   1.00    &   -    &   -    &   1.00    &   -    &   -    &   1.00    \\
    & \textbf{DecStar-opt-fb}   & -    &   -    &   1.00    & -    &   -    &   1.00    &   -    &   -    &   1.00    &   -    &   -    &   1.00    &   -    &   -    &   1.00    &   -    &   -    &   1.00    \\
    & \textbf{MCTS} & - & - & 1.00 & - & - & 1.00 & - & - & 1.00 & - & - & 1.00 & - & - & 1.00 & - & - & 1.00 \\
    \midrule
    \multirow{6}{*}{\STAB{\rotatebox[origin=c]{90}{satisficing}}} & \textbf{FF}               &  16.71 &   \underline{0.19} &   \textbf{0.00}    &  34.44 &   0.55 &   \textbf{0.00}    & \underline{159.04} &   5.30  &   0.09 & 128.41 &   6.62 &   0.24 &  62.86 &   3.40  &   0.47 &  \textbf{57.74} &   4.03 &   0.44 \\
    & \textbf{FF-X}             &  16.71 &   0.25 &   \textbf{0.00}    &  34.44 &   0.58 &   \textbf{0.00}    & 159.80  &   5.02 &   \underline{0.08} & \underline{128.19} &   6.72 &   0.24 &  67.88 &   3.48 &   0.89 &  61.19 &   7.56 &   0.77 \\
    & \textbf{FD-lama-first}    &  15.19 &   2.96 &   0.33 &  38.47 &   3.25 &   0.18 & 208.28 &   6.35 &   0.49 & 156.34 &   4.92 &   0.29 &  66.81 &   3.20  &   0.49 &  61.13 &   3.34 &   0.56 \\
    & \textbf{Cerberus-sat}     &  \textbf{11.50}  &  12.00    &   0.85 & -    &   -    &   1.00    & -    &   -    &   1.00    & -    &   -    &   1.00    & -    &   -    &   1.00    & -    &   -    &   1.00    \\
    & \textbf{Cerberus-agl}     &  14.77 &   5.13 &   0.45 &  33.00    &   7.30  &   0.49 & 176.60  &   8.91 &   0.72 & \textbf{125.73} &  12.99 &   0.83 &  \underline{60.50}  &   7.62 &   0.60  &  59.19 &   7.05 &   0.77 \\
    & \textbf{DecStar-agl-fb}   &  \underline{14.72} &   2.62 &   0.55 &  34.96 &   2.58 &   0.58 & 211.16 &   7.20  &   0.82 & 132.60  &   4.50  &   0.58 &  66.30  &   3.02 &   0.71 &  \underline{58.75} &   4.46 &   0.71 \\
    & \textbf{BFWS}             &  15.56 &   0.90  &   0.22 &  32.16 &   \underline{0.37} &   0.18 & \textbf{151.17} &   \underline{0.41} &   0.23 & 152.71 &   \underline{1.13} &   \underline{0.21} &  \textbf{56.90}  &   \underline{0.94} &   \underline{0.41} &  61.92 &   \underline{2.30}  &   \underline{0.43} \\
    & \textbf{Regression-plan} & - & - & 1.00 & - & - & 1.00 & - & - & 1.00 & - & - & 1.00 & - & - & 1.00 & - & - & 1.00 \\
    \midrule
    \multirow{2}{*}{\STAB{\rotatebox[origin=c]{90}{learn}}} & \textbf{Relational policy~\cite{relationalpolicy}} & - & - & 1.00 & - & - & 1.00 & - & - & 1.00 & - & - & 1.00 & - & - & 1.00 & - & - & 1.00 \\
    & \textbf{PLOI~\cite{ploi}}             &  16.45 &   \textbf{0.00$^*$}    &   \textbf{0.00}    &  37.04 &   \textbf{0.00$^*$}    &   \textbf{0.00}    & 213.43 &   \textbf{0.17} &   \textbf{0.00}    & 161.90  &   \textbf{0.34} &   \textbf{0.00}    &  78.68 &   \textbf{0.22} &   \textbf{0.24} &  71.71 &   \textbf{0.26} &   \textbf{0.26} \\
    \bottomrule
    \end{tabular}
    } %
    \label{table:taskography-main}
\end{table}

\subsection{Benchmarking classical and learned planners on \taskography{}}
\label{sec:taskography-results-discussion}

We present the empirical results on the \taskography{} benchmark across several classes of task-planners in Table. \ref{table:taskography-main}. (Please consult supplementary material for a number of additional results).

\textbf{Evaluation protocol.} We treat the evaluation of optimal planners separately to the remaining methods.
Optimal planners are not intended to be fast unlike satisficing and learning-based variants. Rather, they compute a solution of minimum length (not necessarily unique) to a given problem.
Optimal planners are hence allotted 10 minutes to solve each problem, while satisficing and learning-based planners are allotted 30 seconds.
For learning-based methods, we evaluate results over 10 random seeds for statistical significance. We report standard deviations in the supplementary material.
All domains comprise 40 training problems. The domains tagged \emph{Tiny} and \emph{Medium} comprise 55 and 182 test problems respectively, unless otherwise specified.

\textbf{Optimal planners work only on the simplest of domains.} 
Despite the reasonable performance of optimal planners on the \emph{Rearrangment(1)} domain, they are unable to efficiently scale with increasing task complexity and fail to solve a single task on the Rearrangment (k) and Courier (n, k) domains for $k>2$. 
In particular, the Rearrangement(1) domain is a superset of the grounded hierarchical path planning (HPP) task as described by \citet{kimera}. Because the HPP task does not consider state changes to the scene graph (i.e., directly equating the \sg{} to the planning graph for search), efficient shortest path planning is tractable. However, increasingly complex robot tasks requires more than the mere ability to path plan in \sgs{}.

\begin{wraptable}{r}{.4\textwidth}
    \vspace{-.7cm}
    \caption{Interestingly, task complexity does not correlate strongly with scene complexity. It is instead determined by the number of operators, and avg. branch factor.}
    \begin{minipage}{.4\textwidth}
    \adjustbox{max width=\textwidth}{
\begin{tabular}{lrrrrrr}
\toprule
 & \multicolumn{3}{c}{\textbf{Rearr(10)} Tiny} & \multicolumn{3}{c}{\textbf{Rearr(10)} Medium} \\
 \cmidrule{2-4} \cmidrule{5-7}
 \textbf{Planner}        &   \textbf{Len.} &   \textbf{Time} &   \textbf{Fail} &   \textbf{Len.} &   \textbf{Time} &   \textbf{Fail} \\
\midrule
 \textbf{FF}             & \underline{162.61} &   7.04 &   \textbf{0.07} & \underline{159.04} &   \underline{5.30}  &   \textbf{0.09} \\
 \textbf{FD (satisficing)}  & 205.89 &   7.68 &   0.51 & 208.28 &   6.35 &   0.49 \\
 \textbf{DecStar-agl-fb} & 193.00    &   \underline{6.78} &   0.85 & 211.16 &   7.20  &   0.82 \\
 \textbf{BFWS}           & \textbf{160.93} &   \textbf{0.57} &   \underline{0.18} & \textbf{151.17} &   \textbf{0.41} &   \underline{0.23} \\
\bottomrule
\end{tabular}
} %

    \label{table:domain-vs-scene-complexity}
    \end{minipage}
    \vspace{-1cm}
\end{wraptable}
\textbf{Planning performance degrades with domain complexity, not scene complexity.}
We observe an increase in the number of planning failures and timeouts as satisficing planners are applied to larger \emph{Rearrangement(k)} domains (Table~\ref{table:domain-vs-scene-complexity}). 
Interestingly, larger scenes do not appear to directly correlate with task complexity, as the performance metrics remain largely consistent between the tiny and medium splits of the same domain (Table.~\ref{table:domain-vs-scene-complexity}).

\textbf{Satisficing planners fail in domains requiring long-horizon reasoning.}
In the \emph{Courier(n, k)} domains, satisficing planners tend to produce shorter length solutions by leveraging the knapsack's capacity to stow objects on the way to various delivery points.
However, the planners often display shortsighted behaviours by stowing objects early in the search, depleting knapsack slots that could potentially help further along the task. This yields dead-end configurations and excessive backtracking, and thus, an increase in timeouts is observed.

\textbf{Planners that do not exploit forward heuristics fail due to large branching factors.}
Due to the large branching factor of our domains, common strategies such as Monte-Carlo Tree Search (MCTS) and MC Regression Planning are unable to solve any task within a reasonable time constraint.
For instance, a \emph{Rearrangement(10)} task has an average branching factor of 6.5 for MCTS. Since a reward is only obtained at the end (typical planners take 200 steps to get there), MCTS degenerates to a slow breadth-first search.

\begin{wrapfigure}{r}{.4\textwidth}
    \vspace{-.5cm}
    \begin{minipage}{.4\textwidth}
        \includegraphics[width=\textwidth]{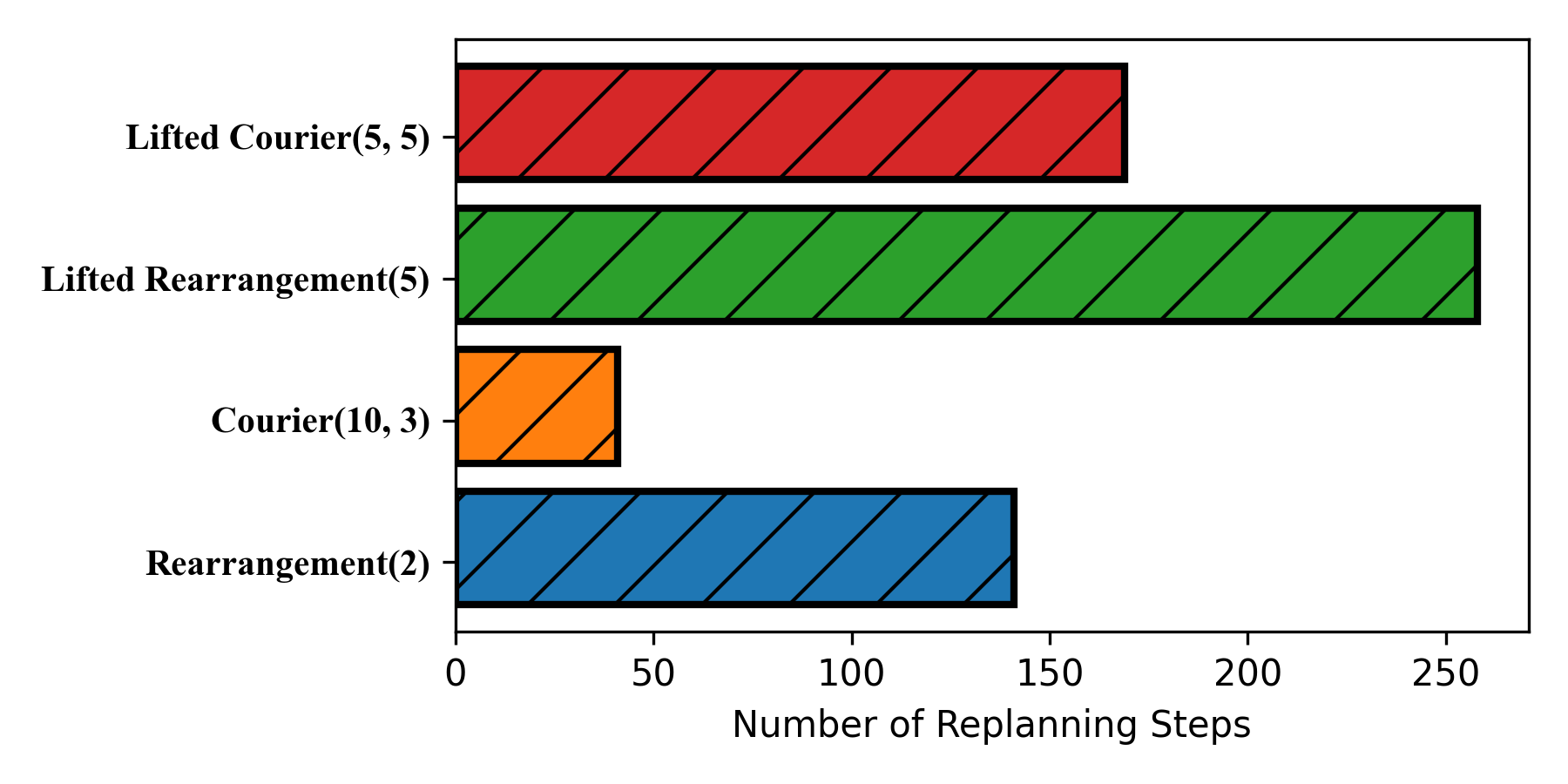}
        \captionof{figure}{Learning-based planners like PLOI outperform all other planners on the benchmark, but still incur significant overhead (number of replanning steps).}
        \label{fig:ploi-replanning-steps}
    \end{minipage}
    \vspace{-.5cm}
\end{wrapfigure}
\textbf{Learning based planners that prune the state space excel on all domains.}
We also evaluate two prevalent learning-to-plan methods based on generalized relational planning~\cite{relationalpolicy} and planning with learned abstractions~\cite{ploi}. 
While the relational policy stuggles to generalize in our domains (long-horizon, sparse rewards), PLOI demonstrates an impressive ability to detect and prune contextually irrelvant parts of the \sgs{}. 
However, it also requires a significant number of replanning steps (see figure to the right) as it often retains objects within a graph without ensuring that all properties and ancestors required to access the object are also preserved.

\textbf{Discussion.}
Our evaluation of existing performant planners on the \taskography{} benchmark consistently reveals two important trends across all domains.
\begin{compactitem}
\item Pruning a \sg{} is essential for real-time performance, more so on challenging domains.
\item While learning-based planners excel across all domains, they require a large number of replanning steps.
\end{compactitem}
These imply that efficient utilization of \sgs{} in real-time robotic task planning requires \emph{both} adapting \sgs{} to better suit existing planners, and enabling performant (learning-based) planners to better exploit \sg{} hierarchies. The remainder of our work addresses these issues.

\vspace{-.3cm}
\section{\scrub{}: Principled sparsification of \sgs{} for efficient planning}
\label{sec:scrub}
\vspace{-.3cm}

\begin{wrapfigure}{R}{.4\linewidth}
\vspace{-.5cm}
    \begin{minipage}{\linewidth}
        \begin{algorithm}[H]
        \tiny
        \SetAlgoLined
        \KwIn{\sg{} $G$, Planning problem $\Pi = \langle \mathcal{O}, \mathcal{P}, \mathcal{A}, \mathcal{T}, \mathcal{C}, \mathcal{I}, \mathcal{G} \rangle$}
        \KwResult{Sparsified \sg{} $\hat{G}$}
        $\hat{\mathcal{O}} = \{ \texttt{} \}$ \tcc*[r]{Init. sufficient object set}
        $g = $ \textsc{Objects}($\mathcal{G}.\textnormal{literals}) \cup \{robot\}$ \tcc*[r]{Init. set of objects in the goal literal set}
        \While{not empty $g$}{
            $\hat{\mathcal{O}} \gets \hat{\mathcal{O}} \cup g$ \\
            $p \gets $ all binary predicates relating a newly added object (i.e. $ o \in g-\hat{\mathcal{O}}$) to its ancestors in $G$ \\
            $g \gets \textsc{Objects}(p)$

            \If{all objects $\mathcal{O}$ visited}{
                break
            }
        }
        $\hat{G} \gets G$ \tcc*[r]{Initialize sparsified scenegraph}
        \textsc{ConnectRooms}  \tcc*[r]{All-pairs shortest paths}
        Remove all nodes from $\hat{G}$ that are not in $\hat{\mathcal{O}}$ \\ %
        Prune literals that are no longer valid in the sparsified graph \\ %
        \caption{\scrub{}}
        \end{algorithm}
    \end{minipage}
\vspace{-1cm}
\end{wrapfigure}

As discussed above, learning-based planners leverage a wealth of prior knowledge acquired during a training phase to significantly prune extraneous scene graph entities.
We argue that, if equipped with the right sparsification machinery, classical planners can compete with, or outperform learning methods.
We develop \scrub{}, a principled \sg{} sparsification scheme that prunes a \sg{} $G$ (w.r.t. planning problem $\Pi_G = \langle \cal{O}, \cal{P}, \cal{A}, \cal{T}, \cal{C}, \cal{I}, \cal{G} \rangle$) by removing vertices and edges extraneous to the task, resulting in a sparsified \sg{} $\hat{G}$ (and planning problem $\hat{\Pi}_{\hat{G}} = \langle \hat{\mathcal{O}}, \hat{\mathcal{P}}, \hat{\mathcal{A}}, \hat{\mathcal{T}}, \hat{\mathcal{C}}, \hat{\mathcal{I}}, \mathcal{G} \rangle$)
\begin{definition}
A valid \sg{} sparsification of $G$ for a planning problem $\Pi_G$ to $\hat{G}$ (and corresponding planning problem $\hat{\Pi}_{\hat{G}}$) is a computable function \scrub{}$(\Pi_G) = \hat{\Pi}_{\hat{G}}$ such that, a plan $p$ solves $\Pi_g$ iff it solves $\hat{\Pi}_{\hat{G}}$.
\end{definition}
A satisficing plan for $\Pi_G$ may thus be obtained by simply solving the (much easier to solve) sparsified problem $\hat{\Pi}_{\hat{G}}$. Savings in planning time depend on the complexity of the sparsified subgraph $\hat{G}$.
\scrub{} presents a simple strategy which is guaranteed to be minimal for grounded planning problems and satisficing for lifted planning problems.

For exposition, we consider grounded planning problems; see appendix for how \scrub{} is adapted to lifted planning problems or stochastic transitions.
\scrub{}  begins with an initially empty sufficient object set $\hat{\cal{O}}$. Satisfying the goal minimally requires all ground objects in the goal to be included in the sufficient object set $\hat{\cal{O}}$ (else goal objects are unreachable). In addition, the robot itself must be part of the sufficient set. Let $p$ be the set of all binary predicates which include any of these objects. And let $g$ be the set of all objects contained in $p$. In general, this will be a superset of the objects we started with. We iteratively repeat this process, each time adding the new objects in $g$ to our sufficient set $\hat{\cal{O}}$.

The process terminates either when either the set $g$ has no new objects (indicating convergence), or until all the objects in the scene graph are visited at least once (indicating the input graph already defines a minimal object set).  %
We initialize the nodes of $\hat{G}$ with objects in $\hat{\cal{O}}$, and copy over all edges $(u, v) \in G$ for which both $u, v \in \hat{\cal{O}}$.
\scrub{} terminates in time linear in the number of the predicates or nodes (whichever is larger).

\begin{proposition}
\scrub{} is complete and results in a minimal scene subgraph for all grounded planning problems over the scenegraph domain. (Please refer to supp. material for proof)
\end{proposition}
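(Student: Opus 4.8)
The plan is to prove the two halves of the claim separately: first that \scrub{} returns a \emph{valid} sparsification in the sense of the Definition above (the ``complete'' part), and then that the returned object set $\hat{\mathcal{O}}$ is \emph{minimal}, i.e.\ no proper subset induces a valid sparsification. Both halves hinge on a single structural lemma about \sgs{}, which I would isolate and prove first; termination in time linear in $\max(|\mathcal{P}|,|\mathcal{O}|)$ is immediate since the frontier $g$ grows monotonically within $\hat{\mathcal{O}} \subseteq \mathcal{O}$ and the hierarchy has bounded depth, so I would dispatch it in one line.

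\emph{The structural lemma (the main obstacle).} Because a \sg{} is hierarchical with edges confined to adjacent levels, and because we assume a closed world, I claim that the feasibility of reaching or manipulating any object $o$ in any reachable state depends only on (i) the chain of ancestors of $o$ in $G$ (the place, room, floor, and scene containing it) and (ii) inter-room traversability. Concretely, an action grounded on $o$ is executable only once the robot is co-located with $o$, and the only literals gating this co-location are the binary predicates linking $o$ to its ancestors together with the room-connectivity relations. The hard part is arguing that no object \emph{outside} $o$'s ancestor chain can ever be a prerequisite for acting on $o$: any such dependency would have to be mediated by a binary predicate, and the adjacent-level edge restriction forces every binary predicate to connect an object only to same- or adjacent-level neighbours, exactly the set the closure loop of \scrub{} collects. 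This is where the \sg{} structure and the closed-world assumption do all the work.

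\emph{Completeness.} With the lemma in hand, soundness is the easy direction: any plan solving $\hat{\Pi}_{\hat{G}}$ refers only to objects in $\hat{\mathcal{O}}$ and to literals retained by the final pruning step, all of which are valid in $G$; hence it is executable in $\Pi_G$ and, since every goal object lies in $\hat{\mathcal{O}}$, it reaches $\mathcal{G}$. Here I must also check that each edge introduced by \textsc{ConnectRooms} (the all-pairs shortest paths) expands to a genuine navigation subplan in $G$, so that abstract room-to-room transitions remain realizable after intermediate rooms are pruned; this slightly loosens the literal ``same plan'' reading of the Definition into a plan-expansion correspondence. For the converse, I take any solution of $\Pi_G$ and invoke the closure property of $\hat{\mathcal{O}}$, which is closed under the ancestor/access relation of the lemma, to argue that achieving each goal literal never requires touching an object outside $\hat{\mathcal{O}}$; \textsc{ConnectRooms} guarantees the retained rooms stay mutually reachable, so the restricted plan is valid in $\hat{\Pi}_{\hat{G}}$. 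This yields solvability of $\hat{\Pi}_{\hat{G}}$ whenever $\Pi_G$ is solvable, completing the iff.

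\emph{Minimality.} I would argue by induction on the order in which objects enter $\hat{\mathcal{O}}$ that every element is \emph{necessary}. The base case covers the goal objects and the robot: deleting a goal object makes $\mathcal{G}$ unsatisfiable, and deleting the robot leaves no executable action. For the inductive step, any object $o'$ added to $\hat{\mathcal{O}}$ is an ancestor (via a binary predicate) of some already-necessary object $o$; by the structural lemma, removing $o'$ severs $o$'s access chain and renders $o$ unreachable, which in turn makes $\mathcal{G}$ unreachable because $o$ was necessary. Hence no element of $\hat{\mathcal{O}}$ can be dropped while preserving validity, so $\hat{\mathcal{O}}$ is minimal, establishing the proposition.
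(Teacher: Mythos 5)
Your proof is correct in substance but takes a genuinely different route from the paper's. The paper's proof is a brief case analysis by node type: it assumes a node $n$ is pruned from the \scrubbed{} graph $\hat{G}$, enumerates the possible types (\texttt{agent}, \texttt{building}, \texttt{item}, \texttt{receptacle}, \texttt{place}/\texttt{room}/\texttt{floor}), and argues for each that removal renders the goal unreachable---that is, it proves only the minimality half of the proposition, leaving the ``complete'' half implicit in the construction. You instead decompose the claim into two halves and prove both: a genuine iff on solvability between $\Pi_G$ and $\hat{\Pi}_{\hat{G}}$ (which the paper never argues explicitly), and minimality via induction on insertion order, both resting on an isolated structural lemma that ancestor chains plus inter-room traversability are the only access prerequisites for acting on an object. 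Your route is more rigorous and buys two things the paper glosses over: it surfaces the tension in the Definition's literal ``same plan $p$'' reading once \textsc{ConnectRooms} replaces pruned intermediate rooms (you correctly weaken this to a plan-expansion correspondence), and it makes explicit where the closed-world and adjacent-level-edge assumptions do the work; the paper's route buys brevity and direct domain grounding. One caveat on your minimality induction: if \textsc{ConnectRooms} is read as \emph{retaining} the intermediate rooms along shortest paths---the reading suggested by the paper's case 4, where rooms are kept because they are ``required to traverse from the start state to the goal state''---then those connector rooms are not ancestors of any retained object and fall outside your inductive step, so their necessity needs a separate argument; one is available because the appendix constructs room connectivity as a minimum spanning tree, making inter-room paths unique and every room on them unavoidable.
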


\begin{minipage}{\textwidth}
    \begin{minipage}{.6\textwidth}
        \includegraphics[width=\linewidth]{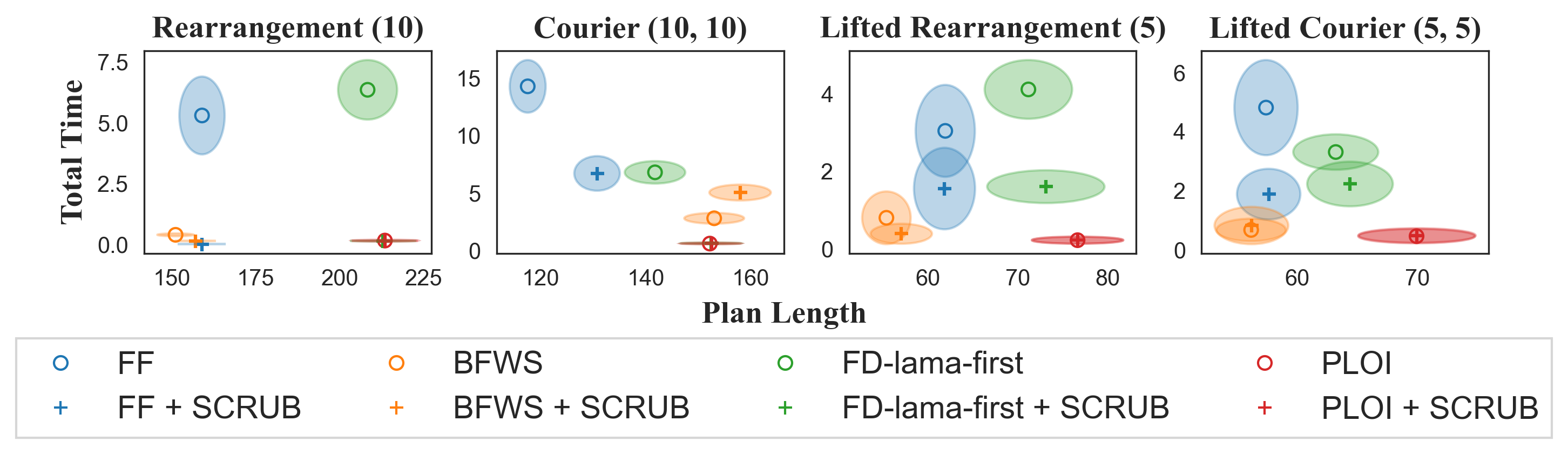}
        \captionof{figure}{Best performing planners with and without \scrub{}.}
        \label{fig:scrub-planners}
    \end{minipage}
    \begin{minipage}{.40\textwidth}
        \includegraphics[width=\linewidth]{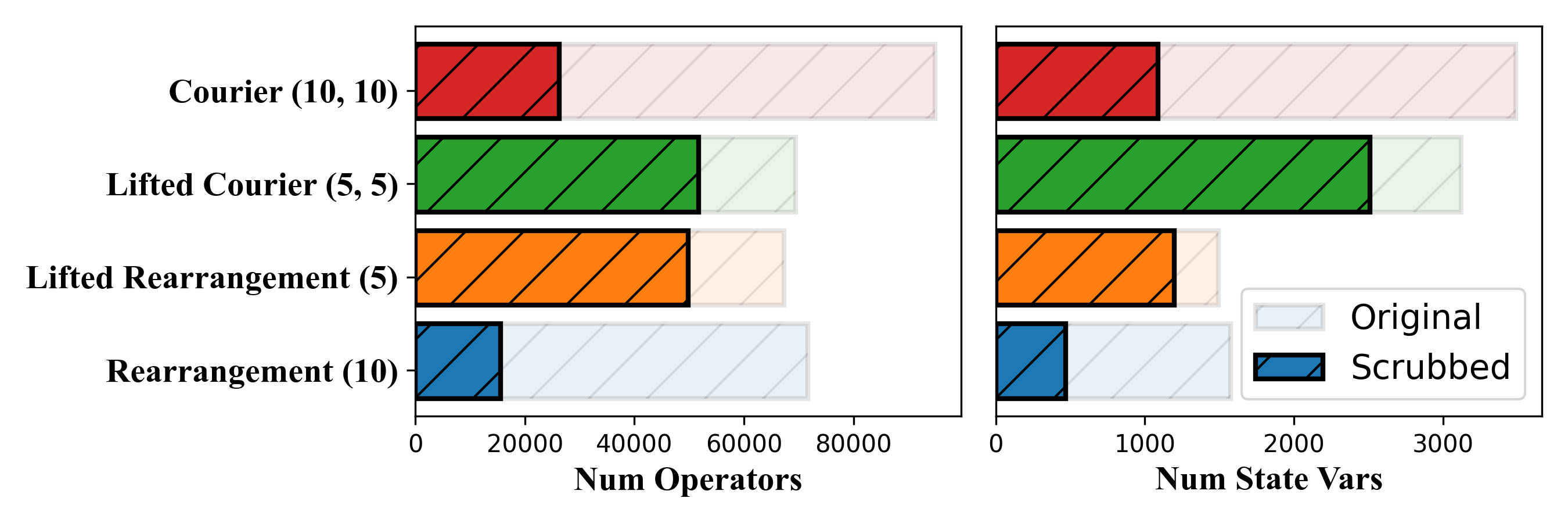}
        \captionof{figure}{\scrub{} greatly prunes operators and states of planning problems.}
        \label{fig:scrub-domain-stats}
    \end{minipage}
\end{minipage}

\subsection{Impact of \scrub{} on modern task planners}
In this section, we investigate the effect that a \sg{} reduction scheme like \scrub{} may have on the performance of modern task planners. We experiment with the four domains shown in~Fig.~\ref{fig:scrub-planners} and evaluate the impact of scrub on planning performance and on domain structure.

\textbf{\scrub{} enables classical planners to obtain performance at least as good as state-of-the-art planners}. In Fig.~\ref{fig:scrub-planners}, we see that \scrub{} drastically reduces planning time for \texttt{FF}, \texttt{FD-lama-first}, and \texttt{BFWS} to a few milliseconds on Rearrangement(10), and upper-bounds times at 5 seconds on Courier(10, 10).
We see this enables BFWS, FD, and FF to outperform PLOI (lower plan lengths for similar plan times). The grounded domains each have 182 test problems, and the lifted domains each have 70 test problems.

\textbf{\scrub{} greatly reduces the number of operators and states}. To asses the impact of \scrub{}, we compute statistics (number of operators, number of state variables) in Fig.~\ref{fig:scrub-domain-stats}.
We see that \scrub{} prunes \emph{more than two-thirds} of the operators and state variables for grounded planning problems, and about a third in the case of lifted planning problems.

\begin{wraptable}{r}{.35\textwidth}
    \vspace{-.5cm}
    \caption{Planner statistics evaluated over 70 test problems on \emph{Lifted Rearrangement}(5).}
    \adjustbox{width=.35\textwidth}{
        \begin{tabular}{cccc}
            \toprule
            \textbf{Planner} & \textbf{\% Success} & \textbf{Length} & \textbf{Time} \\
            \midrule
            \textbf{FD (satisficing)} & 51.43 & 66.81 & 3.20 \\
            \textbf{FD (satisficing) + \scrub{}} & 72.86  & 73.09  & 1.61 \\ 
            \textbf{FD (optimal)} & - & - & - \\
            \textbf{FD (optimal) + \scrub{}} & 72.86  & 68.33  & 2.26 \\
            \bottomrule
        \end{tabular}
    } %
    \label{table:scrub-optimal}
    \vspace{-.5cm}
\end{wraptable}
\textbf{\scrub{} enables optimal planners to run on lifted domains}. Table~\ref{table:scrub-optimal} reports results of running the satisficing and optimal variants of FD with and without \scrub{}, on the \emph{Lifted Rearrangement(5)} domain. While FD (optimal) did not converge even with a timeout of \emph{24 hours}, FD (optimal) + scrub solved about 72\% of the tasks under a 30-second timeout, taking 2 seconds per task on average.

\section{\seek{}: A procedure for efficient learning-based planning}
\label{sec:seek}

While \scrub{} results in a \sg{} reduction that is guaranteed to find a satisficing plan---if one exists---its conservative approach hurts performance in challenging lifted planning problems as shown in Fig.~\ref{fig:scrub-planners}.
For such problems, learning-based graph-pruning strategies like PLOI~\cite{ploi} outperform classical planners over \scrubbed{} \sgs{}.
However, as can be seen in Sec.~\ref{sec:taskography-results-discussion}, even PLOI~\cite{ploi} incurs a significant number of replanning iterations.

We posit that several replanning iterations may be avoided by exploiting the \sg{} hierarchy.
Pruning strategies like PLOI first score all objects, and retain a minimal set by thresholding.
A simple threshold does little to ensure that all retained objects are reachable from the root of the scene graph.
To alleviate this issue, we propose \seek{}: a procedure that ensures we obtain a connected graph, with the objective of reducing the number of replanning steps needed.

\seek{} requires as input the \sg{}, the planning problem $\Pi$, and an object scoring mechanism $f_\theta$. This scoring mechanism is typically a graph neural network (akin to~\cite{ploi}) that, given the current state, scores each object with an \emph{importance} value in $\left[0, 1\right]$.
We first run the scorer and only retain objects above a threshold score $t$.
We follow an identical approach to PLOI~\cite{ploi} and at each step geometrically decay the threshold by $\gamma$, such that at iteration $i$, the threshold is $t_i = \gamma t_{i-1}$, with $t_0, \gamma \in [0, 1)$.
For each retained object $o$, we recursively traverse up the \sg{}, adding all ancestors of $o$ to the sufficient object set.
This procedure ensures that all objects are reachable from their respective room nodes. While \seek{}, unlike \scrub{}, is not guaranteed to be satisficing, it results in far fewer replanning steps without affecting computation time.

\begin{wrapfigure}{r}{.4\textwidth}
    \vspace{-.7cm}
    \begin{minipage}{.4\textwidth}
        \includegraphics[width=\linewidth]{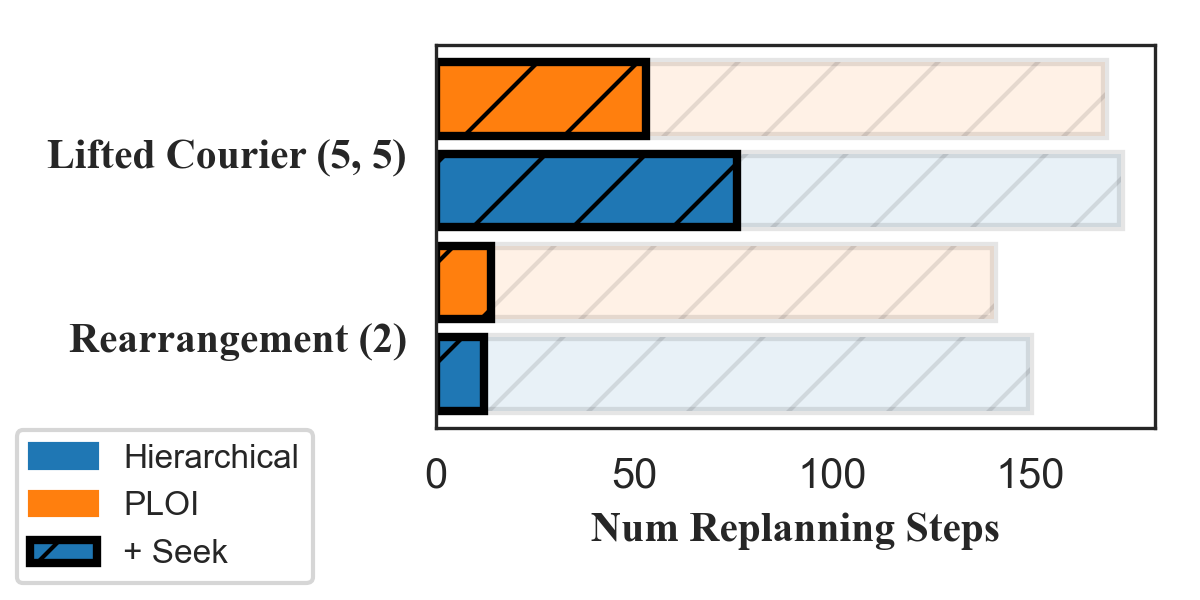}
        \captionof{figure}{\seek{} reduces replanning steps by an order of magnitude.}
        \label{fig:seek-replanning-steps}
    \end{minipage}
    \begin{minipage}{.4\textwidth}
        \includegraphics[width=\linewidth]{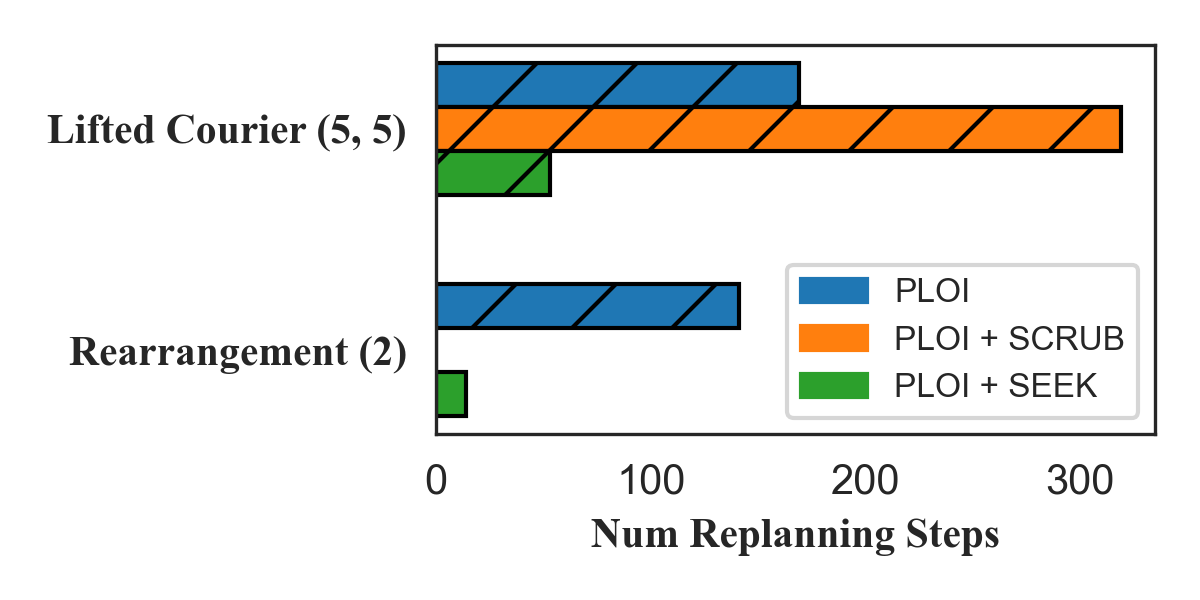}
        \captionof{figure}{\scrub{} on grounded domains, \seek{} on lifted domains.}
        \label{fig:scrub-seek-comparison}
    \end{minipage}
    \vspace{-.7cm}
\end{wrapfigure}

\textbf{\seek{} reduces replanning steps by an order of magnitude}. 
To assess the impact of the \seek{} procedure on planning performance, we evaluate performance with respect to other learning-based planners on \taskography{} in Table~\ref{table:seek-planners}.
As a baseline, we evaluate a \emph{random} pruning strategy that uniformly randomly retains or prunes every object. Even for this naive strategy, \seek{} offers significant performance improvement.
We also evaluate \emph{PLOI}~\cite{ploi} and our adaptation dubbed \emph{hierarchical}, which trains multiple graph neural networks, one for each level of the \sg{} hierarchy.
For each variant, \seek{} offers a consistent performance improvement by decreasing the number of replanning steps required as seen in Fig.~\ref{fig:seek-replanning-steps}.
\seek{} is thus a conceptually simple strategy for use with learning-based planners.

\textbf{\scrub{} on grounded domains, \seek{} on lifted domains}: In general, we note that \scrub{} is more performant on grounded domains (due to minimality properties) and \seek{} is more performant on lifted domains (where \scrub{} typically retains all instances of important object categories, but \seek{} is more effective due to its opportunistic retention of instances (Fig.~\ref{fig:scrub-seek-comparison})).

\newcolumntype{R}{@{\extracolsep{1.3cm}}r@{\extracolsep{10pt}}}%

\begin{table}[]
\caption{\textbf{\seek{}} significantly reduces the number of replanning steps required by state-of-the-art learning-based planners. For each planner, we report average \emph{wall time} (including translation time).}
\adjustbox{max width=\linewidth}{
\begin{tabular}{@{}lrrrrrRrrrrRrrrrRrrrr@{}}
\toprule
\multicolumn{1}{l}{\multirow{2}{*}{\textbf{Planner}}} & \multicolumn{5}{c}{\textbf{Rearrangement (2) - Medium}}                                          & \multicolumn{5}{c}{\textbf{Courier (10, 3) - Medium}}                                            & \multicolumn{5}{c}{\textbf{Lifted Rearrangement (5) - Medium}}                                   & \multicolumn{5}{c}{\textbf{Lifted Courier (5, 5) - Medium}}\\
\cmidrule{3-5} \cmidrule{8-10} \cmidrule{13-15} \cmidrule{18-20}
\multicolumn{1}{c}{}                                  & \textbf{\%Succ.} & \textbf{Len.} & \textbf{\%Used} & \textbf{Time} & \textbf{\#Replan} & \textbf{\%Succ.} & \textbf{Len.} & \textbf{\%Used} & \textbf{Time} & \textbf{\#Replan} & \textbf{\%Succ.} & \textbf{Len.} & \textbf{\%Used} & \textbf{Time} & \textbf{\#Replan} & \textbf{\%Succ.} & \textbf{Len.} & \textbf{\%Used} & \textbf{Time} & \textbf{\#Replan} \\
\cmidrule{2-6} \cmidrule{7-11} \cmidrule{12-16} \cmidrule{17-21}
\textbf{Random}                                         & 0.87              & 39.81        & 0.99           & 9.51         & 836               & 0.62              & 180           & 0.10           & 12.11        & 204               & 0.63              & 68.98        & 0.99           & 10.93        & 235               & 0.67              & 67.89        & 0.98           & 10.81        & 233               \\
\textbf{Random + \seek{}}                                  & 0.86               & 39.82        & 0.98           & 8.55         & \textbf{543}               & 0.60                & 183.49       & 0.99           & 12.33        & \textbf{162}               & 0.59              & 69.22         & 0.97           & 9.52         & \textbf{155}               & 0.63              & 65.48        & 0.97           & 10.97        & \textbf{167}               \\
\midrule
\textbf{Hierarchical}                                   & 1                  & 35.76        & 0.28           & 0.45         & 150               & 1                  & 191.75       & 0.48           & 1.16         & 40                & 0.80                & 76.75         & 0.59           & 2.60         & 269               & 0.73              & 69.69        & 0.61           & 2.73         & 173               \\
\textbf{Hierarchical + \seek{}}                            & 1                  & 35.76        & 0.28           & 0.30           & \textbf{12}                & 1                  & 191.75       & 0.48           & 0.97         & \textbf{7}                 & 0.80                & 76.70        & 0.56           & 2.20         & \textbf{208}               & 0.77              & 76.04        & 0.55           & 1.59         & \textbf{76}                \\
\midrule
\textbf{PLOI~\cite{ploi}}                                           & 1                  & 35.76        & 0.28           & 0.44         & 141               & 1                  & 191.75       & 0.48           & 1.13         & 41                & 0.79              & 78.16        & 0.59           & 2.49         & 258               & 0.73              & 69.88        & 0.62           & 2.75         & 169               \\
\textbf{PLOI + \seek{}}                                    & 1                  & 35.76        & 0.28           & 0.31         & \textbf{14}                & 1                  & 191.75       & 0.48           & 0.97         & \textbf{7}                 & 0.80                & 76.61        & 0.56           & 2.18         & \textbf{197}               & 0.77              & 79.19        & 0.55           & 1.53         & \textbf{53}                \\ \bottomrule
\end{tabular}
} %
\vspace{-.5cm}
\label{table:seek-planners}
\end{table}

\vspace{-.35cm}
\section{Concluding remarks}
\label{sec:discussion}
\vspace{-.35cm}

\textbf{Limitations.} \taskography{} currently supports only a fraction of the diverse types of planning problems possible on \sgs{}. Geared towards identifying the most promising avenues in learning-based planning, the first release of this benchmark focuses exclusively on offline task planning in fully observable and deterministic domains.
Furthermore, low-level motion planning is excluded from our benchmark. Robots operating in the real world will need to reason under partial observability, sensor noise, and resource constraints.

\textbf{Outlook.} \taskography{}, in conjunction with \scrub{} and \seek{} aid the robot learning community by (a) providing guidelines and implementations for practitioners choosing a task planner, (b) serving as a benchmark for upcoming learning-based planners, and (c) guiding the design of futuristic spatial representations for robotic task planning.
We believe \taskography{} is a first step towards addressing several of the grand challenges along the road to developing general planning capabilities for autonomous intelligent robots.

\section*{Acknowledgements}

CA and KMJ would like to thank \authorhref{https://web.mit.edu/tslvr/www/}{Tom Silver} and \authorhref{https://rohanchitnis.com/}{Rohan Chitnis} for their help at various stages of this project, including early-stage feedback, code release, and for proofreading an initial version of this manuscript. KMJ acknowedges generous fellowship support from NVIDIA. LP acknowledges grants from IVADO and from the CIFAR Canada AI chairs program. FS acknowledges funding support from NSERC and the NFRF exploration program. The authors collectively acknowledge support during various initial stages of the project by \authorhref{https://fgolemo.github.io/}{Florian Golemo}.

\bibliography{ms}  %

\begin{thebibliography}{80}
\providecommand{\natexlab}[1]{#1}
\providecommand{\url}[1]{\texttt{#1}}
\expandafter\ifx\csname urlstyle\endcsname\relax
  \providecommand{\doi}[1]{doi: #1}\else
  \providecommand{\doi}{doi: \begingroup \urlstyle{rm}\Url}\fi

\bibitem[Kuipers(2000)]{spatial-semantic-hierarchy}
B.~Kuipers.
\newblock The spatial semantic hierarchy.
\newblock \emph{Artificial intelligence}, 119\penalty0 (1-2):\penalty0
  191--233, 2000.

\bibitem[Remolina and Kuipers(2004)]{towards-topological-maps}
E.~Remolina and B.~Kuipers.
\newblock Towards a general theory of topological maps.
\newblock \emph{Artificial Intelligence}, 152\penalty0 (1):\penalty0 47--104,
  2004.

\bibitem[Galindo et~al.(2005)Galindo, Saffiotti, Coradeschi, Buschka,
  Fernandez-Madrigal, and Gonz{\'a}lez]{multi-hierarchical-maps}
C.~Galindo, A.~Saffiotti, S.~Coradeschi, P.~Buschka, J.-A. Fernandez-Madrigal,
  and J.~Gonz{\'a}lez.
\newblock Multi-hierarchical semantic maps for mobile robotics.
\newblock In \emph{2005 IEEE/RSJ international conference on intelligent robots
  and systems}, pages 2278--2283. IEEE, 2005.

\bibitem[Zender et~al.(2008)Zender, Mozos, Jensfelt, Kruijff, and
  Burgard]{conceptual-spatial-repr}
H.~Zender, O.~M. Mozos, P.~Jensfelt, G.-J. Kruijff, and W.~Burgard.
\newblock Conceptual spatial representations for indoor mobile robots.
\newblock \emph{Robotics and Autonomous Systems}, 56\penalty0 (6):\penalty0
  493--502, 2008.

\bibitem[Ruiz-Sarmiento et~al.(2017)Ruiz-Sarmiento, Galindo, and
  Gonzalez-Jimenez]{multiversal-semantic-maps}
J.-R. Ruiz-Sarmiento, C.~Galindo, and J.~Gonzalez-Jimenez.
\newblock Building multiversal semantic maps for mobile robot operation.
\newblock \emph{Knowledge-Based Systems}, 119:\penalty0 257--272, 2017.

\bibitem[Rosinol et~al.(2021)Rosinol, Violette, Abate, Hughes, Chang, Shi,
  Gupta, and Carlone]{kimera}
A.~Rosinol, A.~Violette, M.~Abate, N.~Hughes, Y.~Chang, J.~Shi, A.~Gupta, and
  L.~Carlone.
\newblock Kimera: from slam to spatial perception with 3d dynamic scene graphs.
\newblock \emph{arXiv preprint arXiv:2101.06894}, 2021.

\bibitem[Armeni et~al.(2019)Armeni, He, Gwak, Zamir, Fischer, Malik, and
  Savarese]{3dscenegraph}
I.~Armeni, Z.-Y. He, J.~Gwak, A.~R. Zamir, M.~Fischer, J.~Malik, and
  S.~Savarese.
\newblock 3d scene graph: A structure for unified semantics, 3d space, and
  camera.
\newblock In \emph{Proceedings of Computer Vision and Pattern Recognition
  (CVPR)}, 2019.

\bibitem[Rosinol et~al.(2020)Rosinol, Gupta, Abate, Shi, and
  Carlone]{3ddynamicscenegraphs}
A.~Rosinol, A.~Gupta, M.~Abate, J.~Shi, and L.~Carlone.
\newblock 3d dynamic scene graphs: Actionable spatial perception with places,
  objects, and humans.
\newblock In \emph{Robotics Science and Systems (R-SS)}, 2020.

\bibitem[Kim et~al.(2019)Kim, Park, Song, and Kim]{3dscenegraph-cybernetics}
U.-H. Kim, J.-M. Park, T.-J. Song, and J.-H. Kim.
\newblock 3-d scene graph: A sparse and semantic representation of physical
  environments for intelligent agents.
\newblock \emph{IEEE transactions on cybernetics}, 50\penalty0 (12):\penalty0
  4921--4933, 2019.

\bibitem[Ravichandran et~al.(2021)Ravichandran, Peng, Hughes, Griffith, and
  Carlone]{ravichandran2021hierarchical}
Z.~Ravichandran, L.~Peng, N.~Hughes, J.~D. Griffith, and L.~Carlone.
\newblock Hierarchical representations and explicit memory: Learning effective
  navigation policies on 3d scene graphs using graph neural networks.
\newblock \emph{arXiv preprint arXiv:2108.01176}, 2021.

\bibitem[Kurenkov et~al.(2020)Kurenkov, Mart{\'\i}n-Mart{\'\i}n, Ichnowski,
  Goldberg, and Savarese]{hms}
A.~Kurenkov, R.~Mart{\'\i}n-Mart{\'\i}n, J.~Ichnowski, K.~Goldberg, and
  S.~Savarese.
\newblock Semantic and geometric modeling with neural message passing in 3d
  scene graphs for hierarchical mechanical search.
\newblock In \emph{IEEE International Conference on Robotics and Automation
  (ICRA)}, 2020.

\bibitem[Talak et~al.(2021)Talak, Hu, Peng, and Carlone]{neuraltrees}
R.~Talak, S.~Hu, L.~Peng, and L.~Carlone.
\newblock Neural trees for learning on graphs.
\newblock \emph{arXiv preprint arXiv:2105.07264}, 2021.

\bibitem[Silver et~al.(2020)Silver, Chitnis, Curtis, Tenenbaum, Lozano-Perez,
  and Kaelbling]{ploi}
T.~Silver, R.~Chitnis, A.~Curtis, J.~Tenenbaum, T.~Lozano-Perez, and L.~P.
  Kaelbling.
\newblock Planning with learned object importance in large problem instances
  using graph neural networks.
\newblock \emph{AAAI International conference on Artificial Intelligence},
  2020.

\bibitem[Kautz and Selman(1996)]{kautz1996pushing}
H.~Kautz and B.~Selman.
\newblock Pushing the envelope: Planning, propositional logic, and stochastic
  search.
\newblock In \emph{Proceedings of the National Conference on Artificial
  Intelligence}, pages 1194--1201. Citeseer, 1996.

\bibitem[Kautz and Selman(1999)]{kautz1999unifying}
H.~Kautz and B.~Selman.
\newblock Unifying sat-based and graph-based planning.
\newblock In \emph{IJCAI}, volume~99, pages 318--325, 1999.

\bibitem[Kautz et~al.(2006)Kautz, Selman, and Hoffmann]{satplan}
H.~Kautz, B.~Selman, and J.~Hoffmann.
\newblock Satplan: Planning as satisfiability.
\newblock In \emph{5th international planning competition}, volume~20, page
  156, 2006.

\bibitem[Katz et~al.(2018)Katz, Sohrabi, Samulowitz, and Sievers]{delfi}
M.~Katz, S.~Sohrabi, H.~Samulowitz, and S.~Sievers.
\newblock Delfi: Online planner selection for cost-optimal planning.
\newblock \emph{IPC-9 planner abstracts}, pages 57--64, 2018.

\bibitem[Browne et~al.(2012)Browne, Powley, Whitehouse, Lucas, Cowling,
  Rohlfshagen, Tavener, Perez, Samothrakis, and Colton]{mcts-survey}
C.~B. Browne, E.~Powley, D.~Whitehouse, S.~M. Lucas, P.~I. Cowling,
  P.~Rohlfshagen, S.~Tavener, D.~Perez, S.~Samothrakis, and S.~Colton.
\newblock A survey of monte carlo tree search methods.
\newblock \emph{IEEE Transactions on Computational Intelligence and AI in
  games}, 4\penalty0 (1):\penalty0 1--43, 2012.

\bibitem[Bonet and Geffner(1999)]{bonet1999planning}
B.~Bonet and H.~Geffner.
\newblock Planning as heuristic search: New results.
\newblock In \emph{European Conference on Planning}, pages 360--372. Springer,
  1999.

\bibitem[Kaindl(1990)]{kaindl1990tree}
H.~Kaindl.
\newblock Tree searching algorithms.
\newblock \emph{Computers, Chess, and Cognition}, pages 133--158, 1990.

\bibitem[Bonet and Geffner(2001)]{hsp}
B.~Bonet and H.~Geffner.
\newblock Planning as heuristic search.
\newblock \emph{Artificial Intelligence}, 129\penalty0 (1-2):\penalty0 5--33,
  2001.

\bibitem[Hoffmann and Nebel(2001)]{ff}
J.~Hoffmann and B.~Nebel.
\newblock The ff planning system: Fast plan generation through heuristic
  search.
\newblock \emph{Journal of Artificial Intelligence Research}, 14:\penalty0
  253--302, 2001.

\bibitem[Helmert(2006)]{fd}
M.~Helmert.
\newblock The fast downward planning system.
\newblock \emph{Journal of Artificial Intelligence Research}, 26:\penalty0
  191--246, 2006.

\bibitem[Katz(2018)]{cerberus}
M.~Katz.
\newblock Cerberus: Red-black heuristic for planning tasks with conditional
  effects meets novelty heuristic and enchanced mutex detection.
\newblock \emph{IPC-9 planner abstracts}, pages 47--51, 2018.

\bibitem[Gnad et~al.(2018)Gnad, Shleyfman, and Hoffmann]{decstar}
D.~Gnad, A.~Shleyfman, and J.~Hoffmann.
\newblock Decstar--star-topology decoupled search at its best.
\newblock \emph{IPC-9 planner abstracts}, pages 42--46, 2018.

\bibitem[Long and Fox(2003)]{ipc3}
D.~Long and M.~Fox.
\newblock The 3rd international planning competition: Results and analysis.
\newblock \emph{Journal of Artificial Intelligence Research}, 20:\penalty0
  1--59, 2003.

\bibitem[Vallati et~al.(2015)Vallati, Chrpa, Grze{\'s}, McCluskey, Roberts,
  Sanner, et~al.]{ipc2014}
M.~Vallati, L.~Chrpa, M.~Grze{\'s}, T.~L. McCluskey, M.~Roberts, S.~Sanner,
  et~al.
\newblock The 2014 international planning competition: Progress and trends.
\newblock \emph{Ai Magazine}, 36\penalty0 (3):\penalty0 90--98, 2015.

\bibitem[Chitnis et~al.(2020)Chitnis, Silver, Kim, Kaelbling, and
  Lozano-Perez]{chitnis2020camps}
R.~Chitnis, T.~Silver, B.~Kim, L.~P. Kaelbling, and T.~Lozano-Perez.
\newblock Camps: Learning context-specific abstractions for efficient planning
  in factored mdps.
\newblock \emph{arXiv preprint arXiv:2007.13202}, 2020.

\bibitem[Galindo et~al.(2004)Galindo, Fernandez-Madrigal, and
  Gonzalez]{galindo2004improving}
C.~Galindo, J.-A. Fernandez-Madrigal, and J.~Gonzalez.
\newblock Improving efficiency in mobile robot task planning through world
  abstraction.
\newblock \emph{IEEE Transactions on Robotics (T-RO)}, 20\penalty0
  (4):\penalty0 677--690, 2004.

\bibitem[Kavraki et~al.(1996)Kavraki, Svestka, Latombe, and Overmars]{prm}
L.~E. Kavraki, P.~Svestka, J.-C. Latombe, and M.~H. Overmars.
\newblock Probabilistic roadmaps for path planning in high-dimensional
  configuration spaces.
\newblock \emph{IEEE transactions on Robotics and Automation}, 12\penalty0
  (4):\penalty0 566--580, 1996.

\bibitem[Galindo et~al.(2008)Galindo, Fernández-Madrigal, González, and
  Saffiotti]{task-planning-semantic-maps}
C.~Galindo, J.-A. Fernández-Madrigal, J.~González, and A.~Saffiotti.
\newblock Robot task planning using semantic maps.
\newblock \emph{Robotics and Autonomous Systems}, 56\penalty0 (11):\penalty0
  955--966, 2008.
\newblock ISSN 0921-8890.
\newblock \doi{https://doi.org/10.1016/j.robot.2008.08.007}.

\bibitem[Kaelbling and Lozano-P{\'e}rez(2013)]{kaelbling2013integrated}
L.~P. Kaelbling and T.~Lozano-P{\'e}rez.
\newblock Integrated task and motion planning in belief space.
\newblock \emph{The International Journal of Robotics Research}, 32\penalty0
  (9-10):\penalty0 1194--1227, 2013.

\bibitem[Kim et~al.(2019)Kim, Wang, Kaelbling, and
  Lozano-P{\'e}rez]{tamp-scorespace}
B.~Kim, Z.~Wang, L.~P. Kaelbling, and T.~Lozano-P{\'e}rez.
\newblock Learning to guide task and motion planning using score-space
  representation.
\newblock \emph{The International Journal of Robotics Research}, 38\penalty0
  (7):\penalty0 793--812, 2019.

\bibitem[Garrett et~al.(2021)Garrett, Chitnis, Holladay, Kim, Silver,
  Kaelbling, and Lozano-P{\'e}rez]{garrett2021integrated}
C.~R. Garrett, R.~Chitnis, R.~Holladay, B.~Kim, T.~Silver, L.~P. Kaelbling, and
  T.~Lozano-P{\'e}rez.
\newblock Integrated task and motion planning.
\newblock \emph{Annual review of control, robotics, and autonomous systems},
  4:\penalty0 265--293, 2021.

\bibitem[Kaelbling and Lozano-P{\'e}rez(2010)]{tamp-hierarchical}
L.~P. Kaelbling and T.~Lozano-P{\'e}rez.
\newblock Hierarchical planning in the now.
\newblock In \emph{Workshops at the Twenty-Fourth AAAI Conference on Artificial
  Intelligence}, 2010.

\bibitem[Stock et~al.(2015)Stock, Mansouri, Pecora, and
  Hertzberg]{stock2015online}
S.~Stock, M.~Mansouri, F.~Pecora, and J.~Hertzberg.
\newblock Online task merging with a hierarchical hybrid task planner for
  mobile service robots.
\newblock In \emph{2015 IEEE/RSJ International Conference on Intelligent Robots
  and Systems (IROS)}, pages 6459--6464. IEEE, 2015.

\bibitem[Gonz{\'a}lez et~al.(2017)Gonz{\'a}lez, Pulido, and
  Fern{\'a}ndez]{gonzalez2017three}
J.~C. Gonz{\'a}lez, J.~C. Pulido, and F.~Fern{\'a}ndez.
\newblock A three-layer planning architecture for the autonomous control of
  rehabilitation therapies based on social robots.
\newblock \emph{Cognitive Systems Research}, 43:\penalty0 232--249, 2017.

\bibitem[Beetz et~al.(2012)Beetz, Jain, Mosenlechner, Tenorth, Kunze, Blodow,
  and Pangercic]{beetz2012cognition}
M.~Beetz, D.~Jain, L.~Mosenlechner, M.~Tenorth, L.~Kunze, N.~Blodow, and
  D.~Pangercic.
\newblock Cognition-enabled autonomous robot control for the realization of
  home chore task intelligence.
\newblock \emph{Proceedings of the IEEE}, 100\penalty0 (8):\penalty0
  2454--2471, 2012.

\bibitem[Konidaris et~al.(2012)Konidaris, Kuindersma, Grupen, and
  Barto]{learning-skill-trees}
G.~Konidaris, S.~Kuindersma, R.~Grupen, and A.~Barto.
\newblock Robot learning from demonstration by constructing skill trees.
\newblock \emph{The International Journal of Robotics Research}, 31\penalty0
  (3):\penalty0 360--375, 2012.

\bibitem[Bercher et~al.(2019)Bercher, Alford, and
  H{\"o}ller]{bercher2019survey}
P.~Bercher, R.~Alford, and D.~H{\"o}ller.
\newblock A survey on hierarchical planning-one abstract idea, many concrete
  realizations.
\newblock In \emph{IJCAI}, pages 6267--6275, 2019.

\bibitem[Rivlin et~al.(2020)Rivlin, Hazan, and Karpas]{relationalpolicy}
O.~Rivlin, T.~Hazan, and E.~Karpas.
\newblock Generalized planning with deep reinforcement learning.
\newblock \emph{arXiv preprint arXiv:2005.02305}, 2020.

\bibitem[Karia and Srivastava(2020)]{relational-heuristic}
R.~Karia and S.~Srivastava.
\newblock Learning generalized relational heuristic networks for model-agnostic
  planning.
\newblock \emph{arXiv preprint arXiv:2007.06702}, 2020.

\bibitem[Gnad et~al.(2019)Gnad, Torralba, Dom{\'\i}nguez, Areces, and
  Bustos]{gnad2019learning}
D.~Gnad, A.~Torralba, M.~Dom{\'\i}nguez, C.~Areces, and F.~Bustos.
\newblock Learning how to ground a plan--partial grounding in classical
  planning.
\newblock In \emph{Proceedings of the AAAI Conference on Artificial
  Intelligence}, volume~33, pages 7602--7609, 2019.

\bibitem[Oh et~al.(2017)Oh, Singh, Lee, and Kohli]{oh2017zero}
J.~Oh, S.~Singh, H.~Lee, and P.~Kohli.
\newblock Zero-shot task generalization with multi-task deep reinforcement
  learning.
\newblock In \emph{International Conference on Machine Learning}, pages
  2661--2670. PMLR, 2017.

\bibitem[Andreas et~al.(2017)Andreas, Klein, and Levine]{andreas2017modular}
J.~Andreas, D.~Klein, and S.~Levine.
\newblock Modular multitask reinforcement learning with policy sketches.
\newblock In \emph{International Conference on Machine Learning}, pages
  166--175. PMLR, 2017.

\bibitem[Denil et~al.(2017)Denil, Colmenarejo, Cabi, Saxton, and
  de~Freitas]{denil2017programmable}
M.~Denil, S.~G. Colmenarejo, S.~Cabi, D.~Saxton, and N.~de~Freitas.
\newblock Programmable agents.
\newblock \emph{arXiv preprint arXiv:1706.06383}, 2017.

\bibitem[Sun et~al.(2019)Sun, Wu, and Lim]{sun2019program}
S.-H. Sun, T.-L. Wu, and J.~J. Lim.
\newblock Program guided agent.
\newblock In \emph{International Conference on Learning Representations}, 2019.

\bibitem[Yang et~al.(2021)Yang, Inala, Bastani, Pu, Solar-Lezama, and
  Rinard]{yang2021program}
Y.~Yang, J.~P. Inala, O.~Bastani, Y.~Pu, A.~Solar-Lezama, and M.~Rinard.
\newblock Program synthesis guided reinforcement learning.
\newblock \emph{arXiv preprint arXiv:2102.11137}, 2021.

\bibitem[Xu et~al.(2019)Xu, Mart{\'\i}n-Mart{\'\i}n, Huang, Zhu, Savarese, and
  Fei-Fei]{regression-planning-networks}
D.~Xu, R.~Mart{\'\i}n-Mart{\'\i}n, D.-A. Huang, Y.~Zhu, S.~Savarese, and
  L.~Fei-Fei.
\newblock Regression planning networks.
\newblock \emph{arXiv preprint arXiv:1909.13072}, 2019.

\bibitem[Guez et~al.(2018)Guez, Weber, Antonoglou, Simonyan, Vinyals, Wierstra,
  Munos, and Silver]{mcts-nets}
A.~Guez, T.~Weber, I.~Antonoglou, K.~Simonyan, O.~Vinyals, D.~Wierstra,
  R.~Munos, and D.~Silver.
\newblock Learning to search with mctsnets.
\newblock In \emph{International Conference on Machine Learning}, pages
  1822--1831. PMLR, 2018.

\bibitem[Karkus et~al.(2017)Karkus, Hsu, and Lee]{qmdp-net}
P.~Karkus, D.~Hsu, and W.~S. Lee.
\newblock Qmdp-net: Deep learning for planning under partial observability.
\newblock \emph{arXiv preprint arXiv:1703.06692}, 2017.

\bibitem[Karkus et~al.(2018)Karkus, Hsu, and Lee]{karkus2018integrating}
P.~Karkus, D.~Hsu, and W.~S. Lee.
\newblock Integrating algorithmic planning and deep learning for partially
  observable navigation.
\newblock \emph{arXiv preprint arXiv:1807.06696}, 2018.

\bibitem[Guez et~al.(2019)Guez, Mirza, Gregor, Kabra, Racani{\`e}re, Weber,
  Raposo, Santoro, Orseau, Eccles, et~al.]{guez2019investigation}
A.~Guez, M.~Mirza, K.~Gregor, R.~Kabra, S.~Racani{\`e}re, T.~Weber, D.~Raposo,
  A.~Santoro, L.~Orseau, T.~Eccles, et~al.
\newblock An investigation of model-free planning.
\newblock In \emph{International Conference on Machine Learning}, pages
  2464--2473. PMLR, 2019.

\bibitem[Huang et~al.(2019)Huang, Nair, Xu, Zhu, Garg, Fei-Fei, Savarese, and
  Niebles]{neuraltaskgraphs}
D.-A. Huang, S.~Nair, D.~Xu, Y.~Zhu, A.~Garg, L.~Fei-Fei, S.~Savarese, and
  J.~C. Niebles.
\newblock Neural task graphs: Generalizing to unseen tasks from a single video
  demonstration.
\newblock In \emph{Proceedings of the IEEE/CVF Conference on Computer Vision
  and Pattern Recognition}, pages 8565--8574, 2019.

\bibitem[Xu et~al.(2018)Xu, Nair, Zhu, Gao, Garg, Fei-Fei, and
  Savarese]{neuraltaskprogramming}
D.~Xu, S.~Nair, Y.~Zhu, J.~Gao, A.~Garg, L.~Fei-Fei, and S.~Savarese.
\newblock Neural task programming: Learning to generalize across hierarchical
  tasks.
\newblock In \emph{2018 IEEE International Conference on Robotics and
  Automation (ICRA)}, pages 3795--3802. IEEE, 2018.

\bibitem[Shen et~al.(2018)Shen, Chen, Huang, Guo, and Gao]{mwalk}
Y.~Shen, J.~Chen, P.-S. Huang, Y.~Guo, and J.~Gao.
\newblock M-walk: Learning to walk over graphs using monte carlo tree search.
\newblock \emph{arXiv preprint arXiv:1802.04394}, 2018.

\bibitem[Fi{\v{s}}er(2020)]{fivser2020lifted}
D.~Fi{\v{s}}er.
\newblock Lifted fact-alternating mutex groups and pruned grounding of
  classical planning problems.
\newblock In \emph{Proceedings of the AAAI Conference on Artificial
  Intelligence}, volume~34, pages 9835--9842, 2020.

\bibitem[Fi{\v{s}}er et~al.(2020)Fi{\v{s}}er, Hor{\v{c}}{\'\i}k, and
  Komenda]{fivser2020strengthening}
D.~Fi{\v{s}}er, R.~Hor{\v{c}}{\'\i}k, and A.~Komenda.
\newblock Strengthening potential heuristics with mutexes and disambiguations.
\newblock In \emph{Proceedings of the International Conference on Automated
  Planning and Scheduling}, volume~30, pages 124--133, 2020.

\bibitem[Kumar et~al.(2020)Kumar, Fishman, Danas, Littman, Tellex, and
  Konidaris]{task-scoping}
N.~Kumar, M.~Fishman, N.~Danas, M.~L. Littman, S.~Tellex, and G.~Konidaris.
\newblock Task scoping: Building goal-specific abstractions for planning in
  complex domains.
\newblock \emph{CoRR}, abs/2010.08869, 2020.
\newblock URL \url{https://arxiv.org/abs/2010.08869}.

\bibitem[Battaglia et~al.(2018)Battaglia, Hamrick, Bapst, Sanchez-Gonzalez,
  Zambaldi, Malinowski, Tacchetti, Raposo, Santoro, Faulkner,
  et~al.]{battaglia2018relational}
P.~W. Battaglia, J.~B. Hamrick, V.~Bapst, A.~Sanchez-Gonzalez, V.~Zambaldi,
  M.~Malinowski, A.~Tacchetti, D.~Raposo, A.~Santoro, R.~Faulkner, et~al.
\newblock Relational inductive biases, deep learning, and graph networks.
\newblock \emph{arXiv preprint arXiv:1806.01261}, 2018.

\bibitem[Kipf et~al.(2018)Kipf, Fetaya, Wang, Welling, and
  Zemel]{kipf2018neural}
T.~Kipf, E.~Fetaya, K.-C. Wang, M.~Welling, and R.~Zemel.
\newblock Neural relational inference for interacting systems.
\newblock In \emph{International Conference on Machine Learning}, pages
  2688--2697. PMLR, 2018.

\bibitem[Kipf et~al.(2019)Kipf, van~der Pol, and Welling]{kipf2019contrastive}
T.~Kipf, E.~van~der Pol, and M.~Welling.
\newblock Contrastive learning of structured world models.
\newblock \emph{arXiv preprint arXiv:1911.12247}, 2019.

\bibitem[Locatello et~al.(2020)Locatello, Weissenborn, Unterthiner, Mahendran,
  Heigold, Uszkoreit, Dosovitskiy, and Kipf]{locatello2020object}
F.~Locatello, D.~Weissenborn, T.~Unterthiner, A.~Mahendran, G.~Heigold,
  J.~Uszkoreit, A.~Dosovitskiy, and T.~Kipf.
\newblock Object-centric learning with slot attention.
\newblock \emph{arXiv preprint arXiv:2006.15055}, 2020.

\bibitem[Kolve et~al.(2017)Kolve, Mottaghi, Han, VanderBilt, Weihs, Herrasti,
  Gordon, Zhu, Gupta, and Farhadi]{ai2thor}
E.~Kolve, R.~Mottaghi, W.~Han, E.~VanderBilt, L.~Weihs, A.~Herrasti, D.~Gordon,
  Y.~Zhu, A.~Gupta, and A.~Farhadi.
\newblock Ai2-thor: An interactive 3d environment for visual ai.
\newblock \emph{arXiv preprint arXiv:1712.05474}, 2017.

\bibitem[Shridhar et~al.(2020)Shridhar, Thomason, Gordon, Bisk, Han, Mottaghi,
  Zettlemoyer, and Fox]{alfred}
M.~Shridhar, J.~Thomason, D.~Gordon, Y.~Bisk, W.~Han, R.~Mottaghi,
  L.~Zettlemoyer, and D.~Fox.
\newblock Alfred: A benchmark for interpreting grounded instructions for
  everyday tasks.
\newblock In \emph{Proceedings of Computer Vision and Pattern Recognition
  (CVPR)}, 2020.

\bibitem[Puig et~al.(2018)Puig, Ra, Boben, Li, Wang, Fidler, and
  Torralba]{puig2018virtualhome}
X.~Puig, K.~Ra, M.~Boben, J.~Li, T.~Wang, S.~Fidler, and A.~Torralba.
\newblock Virtualhome: Simulating household activities via programs.
\newblock In \emph{Proceedings of the IEEE Conference on Computer Vision and
  Pattern Recognition}, pages 8494--8502, 2018.

\bibitem[Savva et~al.(2019)Savva, Kadian, Maksymets, Zhao, Wijmans, Jain,
  Straub, Liu, Koltun, Malik, et~al.]{habitat}
M.~Savva, A.~Kadian, O.~Maksymets, Y.~Zhao, E.~Wijmans, B.~Jain, J.~Straub,
  J.~Liu, V.~Koltun, J.~Malik, et~al.
\newblock Habitat: A platform for embodied ai research.
\newblock In \emph{Proceedings of Computer Vision and Pattern Recognition
  (CVPR)}, 2019.

\bibitem[Weihs et~al.(2021)Weihs, Deitke, Kembhavi, and
  Mottaghi]{weihs2021visual}
L.~Weihs, M.~Deitke, A.~Kembhavi, and R.~Mottaghi.
\newblock Visual room rearrangement.
\newblock \emph{arXiv preprint arXiv:2103.16544}, 2021.

\bibitem[Gan et~al.(2021)Gan, Zhou, Schwartz, Alter, Bhandwaldar, Gutfreund,
  Yamins, DiCarlo, McDermott, Torralba, et~al.]{gan2021threedworld}
C.~Gan, S.~Zhou, J.~Schwartz, S.~Alter, A.~Bhandwaldar, D.~Gutfreund, D.~L.
  Yamins, J.~J. DiCarlo, J.~McDermott, A.~Torralba, et~al.
\newblock The threedworld transport challenge: A visually guided
  task-and-motion planning benchmark for physically realistic embodied ai.
\newblock \emph{arXiv preprint arXiv:2103.14025}, 2021.

\bibitem[Shen et~al.(2020)Shen, Xia, Li, Mart{\'\i}n-Mart{\'\i}n, Fan, Wang,
  Buch, D'Arpino, Srivastava, Tchapmi, et~al.]{igibson}
B.~Shen, F.~Xia, C.~Li, R.~Mart{\'\i}n-Mart{\'\i}n, L.~Fan, G.~Wang, S.~Buch,
  C.~D'Arpino, S.~Srivastava, L.~P. Tchapmi, et~al.
\newblock igibson, a simulation environment for interactive tasks in large
  realisticscenes.
\newblock \emph{IEEE International Conference on Robotics and Automation
  (ICRA)}, 2020.

\bibitem[Xiang et~al.(2020)Xiang, Qin, Mo, Xia, Zhu, Liu, Liu, Jiang, Yuan,
  Wang, et~al.]{sapien}
F.~Xiang, Y.~Qin, K.~Mo, Y.~Xia, H.~Zhu, F.~Liu, M.~Liu, H.~Jiang, Y.~Yuan,
  H.~Wang, et~al.
\newblock Sapien: A simulated part-based interactive environment.
\newblock In \emph{Proceedings of Computer Vision and Pattern Recognition
  (CVPR)}, 2020.

\bibitem[Jiang et~al.(2019)Jiang, Zhang, Khandelwal, and
  Stone]{robot-task-planning-benchmark-2019}
Y.-q. Jiang, S.-q. Zhang, P.~Khandelwal, and P.~Stone.
\newblock Task planning in robotics: an empirical comparison of pddl- and
  asp-based systems.
\newblock \emph{Frontiers of Information Technology \& Electronic Engineering},
  20\penalty0 (3):\penalty0 363--373, 2019.

\bibitem[Shridhar et~al.(2020)Shridhar, Yuan, C{\^o}t{\'e}, Bisk, Trischler,
  and Hausknecht]{shridhar2020alfworld}
M.~Shridhar, X.~Yuan, M.-A. C{\^o}t{\'e}, Y.~Bisk, A.~Trischler, and
  M.~Hausknecht.
\newblock Alfworld: Aligning text and embodied environments for interactive
  learning.
\newblock \emph{arXiv preprint arXiv:2010.03768}, 2020.

\bibitem[Gan et~al.(2020)Gan, Schwartz, Alter, Schrimpf, Traer, De~Freitas,
  Kubilius, Bhandwaldar, Haber, Sano, et~al.]{gan2020threedworld}
C.~Gan, J.~Schwartz, S.~Alter, M.~Schrimpf, J.~Traer, J.~De~Freitas,
  J.~Kubilius, A.~Bhandwaldar, N.~Haber, M.~Sano, et~al.
\newblock Threedworld: A platform for interactive multi-modal physical
  simulation.
\newblock \emph{arXiv preprint arXiv:2007.04954}, 2020.

\bibitem[Xia et~al.(2018)Xia, Zamir, He, Sax, Malik, and Savarese]{gibson}
F.~Xia, A.~R. Zamir, Z.~He, A.~Sax, J.~Malik, and S.~Savarese.
\newblock Gibson env: Real-world perception for embodied agents.
\newblock In \emph{Proceedings of Computer Vision and Pattern Recognition
  (CVPR)}, 2018.

\bibitem[Lipovetzky and Geffner(2017)]{bfws}
N.~Lipovetzky and H.~Geffner.
\newblock Best-first width search: Exploration and exploitation in classical
  planning.
\newblock In \emph{Proceedings of the AAAI Conference on Artificial
  Intelligence}, volume~31, 2017.

\bibitem[Batra et~al.(2020)Batra, Chang, Chernova, Davison, Deng, Koltun,
  Levine, Malik, Mordatch, Mottaghi, et~al.]{rearrangement}
D.~Batra, A.~X. Chang, S.~Chernova, A.~J. Davison, J.~Deng, V.~Koltun,
  S.~Levine, J.~Malik, I.~Mordatch, R.~Mottaghi, et~al.
\newblock Rearrangement: A challenge for embodied ai.
\newblock \emph{arXiv preprint arXiv:2011.01975}, 2020.

\bibitem[pla()]{planutils}
Planutils.
\newblock \url{https://github.com/AI-Planning/planutils}.
\newblock Accessed: June 18 2021.

\bibitem[McDermott et~al.(1998)McDermott, Ghallab, Howe, Knoblock, Ram, Veloso,
  Weld, and Wilkins]{pddl}
D.~McDermott, M.~Ghallab, A.~Howe, C.~Knoblock, A.~Ram, M.~Veloso, D.~Weld, and
  D.~Wilkins.
\newblock Pddl-the planning domain definition language.
\newblock 1998.

\bibitem[Silver and Chitnis(2020)]{pddlgym}
T.~Silver and R.~Chitnis.
\newblock Pddlgym: Gym environments from pddl problems.
\newblock \emph{arXiv preprint arXiv:2002.06432}, 2020.

\end{thebibliography}

\appendix

In the appendix, we discuss additional details and design choices for the \taskography{} benchmark, including extended descriptions for all supported planning domains and their constituents - object types, relations (i.e., properties, predicates), and goal specifications. 
We provide results these additional domains, and discuss \scrub{} and its favourable properties in greater detail.
    
\textbf{Please visit our \href{https://taskography.github.io}{project page} for more details, including a \href{https://www.youtube.com/watch?v=mM4v5hP4LdA}{video abstract}.}

\section{Benchmark Details}

The \taskography{} benchmark comprises 20 robot task planning domains over 3D scene graphs (\sgs{}). In the main paper, we detailed the \emph{Rearrangement(k)}, \emph{Courier(n,k)}, \emph{Lifted Rearrangement(k)}, and \emph{Lifted Courier(n, k)} task definitions following the recently proposed Rearrangement challenge~\cite{rearrangement}.
Table.~\ref{table:taskography-domain-objects} lists the set of \emph{lifted} objects in each planning domain. In all problems, we have one instance of an \emph{agent}, but several ground objects corresponding to the other categories.

\subsection{\taskography{} domain construction: Parsing Gibson 3DSGs}
\label{subsec:parsing-scene-graphs}

We parse the \sgs{} created over Gibson~\cite{3dscenegraph, gibson} mapping scene entities to objects and structural relations to predicates over objects. 
We retain key connectivity constraints that govern traversable paths between locations in the same place, places in the same room, and between rooms. 
Because room connectivity data not is provided in the original database, we estimate it by computing a minimal spanning tree over rooms in the \sgs{} with edge weights reflecting the Euclidean distance between room centroids.
For larger scenes, we impose a single connection between rooms in different floors (e.g., one set of stairs).
Several additional properties are used to express the state of agent and interactable objects, and to associate each of them to a particular location in the \sg{}. 

\vspace{.2cm}
\begin{table}[!h]
    \centering
    \caption{Evaluated \sg{} planning domains in \textsc{Taskography} and object types present in each. Domains are further partitioned into tiny and medium splits akin to the \sgs{} provided over Gibson \cite{3dscenegraph, gibson}. Scene entities are instantiated as a particular object type according to their semantic class.}
    \adjustbox{max width=\linewidth}{
    \begin{tabular}{@{}l|cc|ccccccccc@{}}
    \toprule
    & \textbf{n} & \textbf{k} & \textbf{Agent} & \textbf{Room} & \textbf{Place} & \textbf{Location} & \textbf{Receptacle} & \textbf{Item} & \textbf{Bagslot}  & \textbf{Receptacle Class} & \textbf{Item Class} \\
    \midrule
    \textbf{Rearr(k)} & - & \{1, 2, 5, 10\} & \cmark & \cmark  & \cmark  & \cmark  & \cmark  & \cmark  & \xmark  & \xmark  & \xmark \\
    \textbf{Cour(n, k)} & \{3, 5, 7, 10\} & \{5, 10\} & \cmark & \cmark  & \cmark  & \cmark  & \cmark  & \cmark  & \cmark & \xmark  & \xmark \\
    \textbf{Lifted Rearr(k)} & - & \{5\} & \cmark & \cmark  & \cmark  & \cmark  & \cmark  & \cmark  & \xmark  & \cmark  & \cmark \\
    \textbf{Lifted Cour(n, k)} & \{5\} & \{5\} & \cmark & \cmark  & \cmark  & \cmark  & \cmark  & \cmark  & \cmark  & \cmark  & \cmark \\
    \bottomrule
    \end{tabular}
    } %
    \label{table:taskography-domain-objects}
\end{table}

\vspace{.2cm}

An assignment of values to all possible properties over objects defines a symbolic \emph{state} in the planning problem; hence, actions taken by the robot in \taskography{} alter the symbolic state of the \sg{}.  
We observe a significant variation in the size of the state space between different types of domains as a result of the varying subsets of object and predicate types used to express their respective tasks (see Table.~\ref{table:taskography-domain-predicates}). 
For instance, the \emph{Rearrangement(k)} task represents the lowest complexity domain on \taskography{} and is thereby defined by the smallest subset of object types, predicates, and actions available to the robot.
In contrast, the \emph{Lifted Courier(n, k)} extends the \emph{Rearrangement(k)} task definition with bagslots enabling stow and retrieve operators, as well as receptacle classes and item classes to express lifted class relations in the \sg{} at particular state.

We leverage \textbf{task samplers} built into \taskographyapi{} for generating large-scale and diverse datasets of planning problems over \sgs{}. In a two step process the task samplers automatically parse \sgs{} into plannable symbolic representations (i.e., embedding the agent forms the initial state \(\mathcal{I}\)) before composing goal literals over randomly sampled scene entities.
For grounded problems, goals are conjunctions of \graytext{inReceptacle} literals expressed over randomly sampled item and receptacle target ground instances.
For lifted problems, goal are conjunctions of \graytext{classRelation} literals expressed over randomly sampled item and receptacle target class relations.

\begin{table}[!h]
    \centering
    \caption{Structural relations of \sgs{} and the state of the robot and interactable objects (i.e., items and receptacles) are captured with an assignment of values to all possible predicates over objects. The most challenging \emph{Lifted Courier(n, k)} is the only domain to incorporate all relations, while other domain types in \textsc{Taskography} require only a subset of the properties and relations.}
    \adjustbox{max width=\linewidth}{
    \begin{tabular}{@{}l|ccccccccc@{}}
    \toprule
    \textbf{Object} (:types) & \textbf{Agent} & \textbf{Room} & \textbf{Place} & \textbf{Location} & \textbf{Receptacle} & \textbf{Item} & \textbf{Bagslot}  & \textbf{Receptacle Class} & \textbf{Item Class} \\
    \midrule
    \textbf{Agent} & holdsAny & inRoom & inPlace & atLoc & -  & holdsItem & -  & -  & - \\
    \textbf{Room} & inRoom & connected & placeInRoom + roomCenter  & - & -  & -  & -  & -  & - \\
    \textbf{Place} & inPlace & placeInRoom + roomCenter & -  & locInPlace + placeCenter  & -  & -  & -  & -  & - \\
    \textbf{Location} & atLoc & -  & locInPlace + placeCenter  & -  & recepAtLoc  & itemAtLoc  & -  & -  & - \\
    \textbf{Receptacle} & - & - & - & recepAtLoc  & recepOpened  & inRecep  & -  & recepClass  & - \\
    \textbf{Item} & holdsItem & -  & -  & itemAtLoc  & inRecep  & small + medium + large  & inSlot  & -  & itemClass \\
    \textbf{Bagslot} & - & -  & -  & -  & -  & inSlot  & slotHoldsAny  & -  & - \\
    \textbf{Receptacle Class} & - & -  & -  & -  & recepClass  & -  & -  & -  & classRelation \\
    \textbf{Item Class} & - & -  & -  & -  & -  & itemClass  & -  & classRelation  & - \\
    
    \bottomrule
    \end{tabular}
    } %
    \label{table:taskography-domain-predicates}
\end{table}

\subsection{Domain specifications}
\label{subsec:elaborating-task-specifications}

To provide further clarity on the four task categories (\emph{Rearrangement(k)}, \emph{Courier(n,k)}, \emph{Lifted Rearrangement(k)}, and \emph{Lifted Courier(n, k)}) from which our \sg{} planning domains are constructed, we herein outline hypothetical problem instances involving but a fraction of the objects, attributes, and relations available in \taskography{}.
Let the environment consist of \emph{v} \texttt{rooms} connected by \emph{e} undirected traversability constraints; e.g., \graytext{connected(roomA, roomB)}.
The spatial hierarchy of \sgs{} \cite{3dscenegraph, 3ddynamicscenegraphs} is induced by the appropriate application of structural relations (see Table.~\ref{table:taskography-domain-predicates}) to a discrete set of \texttt{places} in each \texttt{room}, and \texttt{locations} in each \texttt{place}; e.g., \graytext{placeInRoom(placeD, roomC)}, \graytext{locInPlace(locF, placeD)}.
The lowest level of the spatial hierarchy (\texttt{locations}) encodes all occupiable positions for the \texttt{agent}, \texttt{items}, and \texttt{receptacles} in the scene; e.g., \graytext{atLoc(agent, locationB)}, \graytext{itemAtLoc(mugA, locationD)}, \graytext{recepAtLoc(fridgeC, locationG)}.
Such relations equate to logical predicates in \cite{pddl} and can be altered by the agent should the required preconditions of an action be met in the current state; e.g., \graytext{$\neg$holdsAny(agent)} and $\wedge(\text{\graytext{atLoc(agent, locX)}, \graytext{itemAtLoc(mugA, locX)}})$ are preconditions for \textsc{PickupItem}(\graytext{mugA}, \graytext{agent}).

As mentioned in Sec.~\ref{subsec:parsing-scene-graphs}, the goals in grounded planning problems are specified with \graytext{inReceptacle} literals. 
Concretely, a \emph{Rearrangement(k)} task for $k=1$ requires the agent to pick-and-place a ground item in a ground receptacle, where each object in the goal is uniquely identified; e.g., $G=\text{\graytext{inReceptacle(mugA, fridgeC)}}$.
By extension, a \emph{Rearrangement(k)} task for $k=2$ is solved \emph{iff} the agent derives a state satisfying the conjunction of two \graytext{inReceptacle} goal literals; e.g., $G=\wedge(\text{\graytext{inReceptacle(mugA, fridgeC)}, \graytext{inReceptacle(plateD, shelfB)}})$. 
The \emph{Courier(n, k)} domains attribute weights ($w\in{1, 2, 3}$ units) to items based on their volume, and equips the agent with a knapsack of fixed capacity \emph{n} to stow and retrieve items as it traverses the scene.
While the knapsack in \emph{Courier(n, k)} enables planners to exploit stowing capacity to compute lower cost solutions (at the expense of task complexity) in comparison to \emph{Rearrangement(k)}, goals are identically specified between the two task categories since they are both considered grounded.

In stark constrast, lifted planning problems are specified with \graytext{classRelation} literals expressed over item-receptacle class combinations. 
For instance, the following \emph{Lifted Rearrangement(k)} or \emph{Lifted Courier(n, k)} domain with $k=2$, $G=\wedge(\text{\graytext{classRelation(cup, cupboard)}, \graytext{classRelation(plate, sink)}})$, requires the agent to place \textbf{at least one} cup in a cupboard and plate in a sink for the task to be complete.
This disambiguates the planner which is no longer able to exploit ground objects featured in the goal as heuristic \emph{landmarks}, and reduces the effectiveness of deterministic graph sparsification techniques such as \textsc{SCRUB}. 
As in the grounded domain variants, the goal specifications for both the \emph{Lifted Rearrangement(k)} and \emph{Lifted Courier(n, k)} are identical.

\subsection{Symbolic environment interaction}
\label{subsec:symbolic-env-interaction}

The \textbf{action space} of the most complex domain in \taskography{} equips the agent with 16 operators where only a subset are feasible at any given state. 
Below, we describe but a few of these operators which demonstrate motion through \sg{} hierachies and object-level robot interaction.
\begin{compactitem}
    \item \textsc{GoToRoom}: The robot moves from the door of its current room to the door of the target room if the rooms are \emph{connected}.
    \item \textsc{GoToPlace}: The robot moves from the center of its current place to the center of the target place if the places are in the same room.
    \item \textsc{GoToLocation}: The robot moves from the current location to the target location if the locations are in the same place.
    \item \textsc{OpenReceptacle}: The robot opens a queried \emph{openable} receptacle.
    \item \textsc{CloseReceptacle}: The robot closes a queried \emph{openable} receptacle.
    \item \textsc{PickupItem}: The robot picks-up an item at a particular location with a free gripper; three operator variations for picking from non existent, non-opening, and opening receptacles.
    \item \textsc{PlaceItem}: The robot places an in-gripper item at a particular location; two operator variations for placing in non-opening and opening receptacles.
    \item \textsc{StowItem}: The robot stows an in-gripper item in its knapsack: three operator variations for small, medium, and large items consuming increasing numbers of bagslots. 
    \item \textsc{RetrieveItem}: The robot retrieves an item from its knapsack into its gripper; three operator variations for small, medium, and large items freeing increasing number of bagslots.
\end{compactitem}
Should the preconditions for any of these actions not be satisfied, the action is deemed invalid.

\section{\scrub{}: Discussion and analysis}

In the main paper, for sake of brevity, we only discussed the applicability of \scrub{} to grounded planning problems with deterministic transitions. However, by design, \scrub{} may be applied to any planning problem: \emph{lifted} or \emph{grounded}, with \emph{deterministic} or \emph{stochastic} transitions.

In \emph{lifted} planning problems, we modify \scrub{} to trivially include all ground object tuples that satisfy goal conditions into the initial sufficient object set. This in-turn ensures that all of these ground objects are reachable from the start state, ensuring a satisficing plan exists. However, this conservative strategy may resulting in retaining more objects than minimally required -- this is where \seek{} can be applied to opportunistically retain important objects instead.

In a similar vein, for \emph{stochastic} transitions, we modify \scrub{} to include all binary predicates resulting from all possible stochastic transitions from a given node.

We now prove that \scrub{} results in a minimal scene subgraph for all grounded planning problems.

\begin{proposition}
\scrub{} is complete and results in a minimal scene subgraph for all grounded planning problems over the scenegraph domain.
\end{proposition}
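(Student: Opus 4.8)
The plan is to decompose the proposition into its two assertions --- that \scrub{} yields a \emph{valid} sparsification (completeness, i.e.\ the ``iff'' of the definition) and that the retained object set $\hat{\mathcal{O}}$ is \emph{minimal} --- and to ground both in a single structural fact about grounded goals on a \sg{}. Concretely, I would first prove a reachability lemma: since the goal $\mathcal{G}$ is a conjunction of literals over \emph{fixed} ground instances, the set \textsc{Objects}$(\mathcal{G})$ is a fixed finite set, and by the \sg{} edge constraint (edges connect only adjacent levels, terminating in $V^{k-1}\cup V^k\cup V^{k+1}$), any action that touches a goal object $o$ (pick, place, open, etc.) is applicable only when the robot is co-located with $o$, which forces prior traversal through the location, place, and room that are the hierarchical ancestors of $o$. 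Iterating this observation, the set of objects that \emph{any} feasible plan can manipulate or traverse is contained in the ancestral closure of \textsc{Objects}$(\mathcal{G})\cup\{robot\}$ under the binary containment predicates --- and this closure is exactly the fixpoint computed by the \texttt{while} loop, which at each step adds the ancestors of newly inserted objects until no new object appears.

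For completeness I would argue both directions of the ``iff.'' The projection direction (a solution of $\Pi_G$ induces a solution of $\hat\Pi_{\hat G}$) follows from the reachability lemma: deleting from a plan every step that touches an object outside $\hat{\mathcal{O}}$ leaves a plan whose remaining steps are all defined over $\hat{\mathcal{O}}$, and since $\mathcal{G}$ references only objects in $\hat{\mathcal{O}}$ the goal is still met. The refinement direction (a solution of $\hat\Pi_{\hat G}$ induces a solution of $\Pi_G$) is where the \textsc{ConnectRooms} step must be handled: intermediate rooms lying on navigation routes are \emph{not} ancestors of any goal object, so they are pruned, and I must show that navigation between retained rooms is preserved. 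This is precisely what the all-pairs shortest-path edges installed by \textsc{ConnectRooms} guarantee --- each shortcut room-to-room transition in $\hat G$ refines back to the underlying shortest sequence of \textsc{GoToRoom} steps in $G$ with identical accumulated cost, so any $\hat\Pi_{\hat G}$ plan expands to a $\Pi_G$ plan. Soundness of the restriction of $\hat{\mathcal{P}},\hat{\mathcal{A}},\hat{\mathcal{I}}$ (identical preconditions and effects on shared objects, unchanged $\mathcal{G}$) closes this direction.

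For minimality I would show that every element of $\hat{\mathcal{O}}$ is indispensable, so no proper subset is sufficient. The goal objects and the robot are necessary immediately (removing a goal instance makes $\mathcal{G}$ unsatisfiable; removing the robot removes all agency). For each ancestor $a$ added by the loop I would give a disconnection argument: $a$ lies on the hierarchical path from the robot to some goal object $o$, and by the adjacent-level edge constraint $o$ is reachable \emph{only} through $a$, so deleting $a$ renders $o$ unreachable and $\mathcal{G}$ unsatisfiable. Since the loop inserts \emph{only} ancestors --- never siblings or unrelated subtrees --- and \textsc{ConnectRooms} adds edges but no nodes, $\hat{\mathcal{O}}$ contains nothing removable. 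Termination and the linear-time bound are immediate, as each object enters $\hat{\mathcal{O}}$ at most once and each binary predicate is inspected once.

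I expect the main obstacle to be the refinement direction of completeness in the presence of \textsc{ConnectRooms}: I must argue carefully that abstracting away pruned intermediate rooms into shortest-path shortcut edges neither creates spurious solutions (every shortcut transition refines to a genuine traversable route in $G$ of equal cost) nor destroys existing ones, and that this room-level abstraction commutes with the interaction actions defined over the lower levels of the hierarchy. The reachability lemma and the minimality disconnection argument are comparatively routine once the adjacent-level edge constraint of \sgs{} is invoked.
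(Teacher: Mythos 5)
Your proposal is correct in substance but follows a genuinely different route from the paper. The paper's proof addresses only the minimality half, and does so by a case analysis over node \emph{types} (\texttt{agent}, \texttt{building}, \texttt{item}, \texttt{receptacle}, \texttt{place}/\texttt{room}/\texttt{floor}): for each type it argues that pruning a retained node of that type renders the problem unsolvable --- goal items and receptacles become unreachable, and places/rooms/floors are retained only when they feature in the goal or are needed for traversal. Completeness is left essentially implicit (``by construction''), and the fixpoint structure of the \texttt{while} loop is never isolated as a lemma. You instead factor everything through an ancestral-closure/reachability lemma --- the sufficient set is exactly the fixpoint of closing \textsc{Objects}$(\mathcal{G})\cup\{robot\}$ under ancestor-relating binary predicates --- then derive minimality by a uniform disconnection argument (each retained ancestor lies on the unique hierarchical path to some goal object) and prove the ``iff'' of the validity definition in both directions, explicitly confronting the \textsc{ConnectRooms} abstraction. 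Your version buys rigor and type-agnosticism: it would survive adding new node types, whereas the paper's enumeration would need a new case; it also makes explicit the completeness claim that the paper merely asserts. The paper's version buys brevity and concreteness tied to the actual domain vocabulary. One wrinkle to tighten in yours: in the projection direction you say that \emph{deleting} every plan step touching a pruned object yields a valid plan over $\hat{\mathcal{O}}$, but deleting \textsc{GoToRoom} steps through pruned intermediate rooms breaks the preconditions of subsequent navigation actions; you need the same \textsc{ConnectRooms} shortcut-replacement you invoke for the refinement direction (replace each maximal pruned navigation segment by its shortcut transition), so that machinery is needed in both directions, not just one.
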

\begin{proof}
We prove the minimality of \scrub{} by demonstrating that whenever we prune a node from a \scrubbed{} scenegraph, the resultant planning problem is unsolvable.
Assume that we prune a node $n$ from a \scrubbed{} \sg{} $\hat{G}$.
Recall the types of nodes we have in the \sg{}: \texttt{agent, room, place, receptacle, item, floor, building} .
\begin{enumerate}
    \item If $n$ is of type \texttt{agent} or \texttt{building}, the problem is unsolvable, by construction.
    \item If $n$ is of type \texttt{item}, removing it would render the goal state unreachable --- recall that $\hat{G}$ only retains \texttt{item} nodes that feature in the goal state.
    \item If $n$ is of type \texttt{receptacle}, it is retained in $\hat{G}$ either because (a) it is required to access a goal object of type \texttt{item}, or (b) it is a goal \texttt{receptacle} (i.e., a target location an \texttt{item} must be moved into). Removing $n$ will thus render one of the objects in the goal state unreachable.
    \item If $n$ is of type \texttt{place}, \texttt{room} or \texttt{floor}, $n \in \hat{G}$ because $n$ directly features in the goal state, or because $n$ is required to traverse from the start state to the goal state (e.g., rooms that connect the start and goal rooms, etc.).
\end{enumerate}
Since pruning any of these nodes renders the problem unsolvable, the \scrubbed{} graph $\hat{G}$ is a minimal scene subgraph for the grounded planning problem considered.
\end{proof}
\section{Additional results on \taskography{} domains}

In this section, we provide results over several extended domains from the \taskography{} benchmark. Please see Tables~\ref{table:taskography-rearrangement-tiny}, \ref{table:taskography-rearrangement-medium}, \ref{table:taskography-courier-tiny}, \ref{table:taskography-courier-medium}, \ref{table:taskography-lifted-rearrangement}, \ref{table:taskography-lifted-courier}.

\begin{table}[!h]
    \centering
    \caption{Performance of planners over the \emph{Rearrangement(k)}-Tiny tasks. For all metrics, lower values indicate better performance.}
    \adjustbox{max width=\linewidth}{
    \begin{tabular}{@{}llrrrrrrrrr@{}}
    \toprule
    &  & \multicolumn{3}{c}{\textbf{Rearr(1)} Tiny} & \multicolumn{3}{c}{\textbf{Rearr(2)} Tiny} & \multicolumn{3}{c}{\textbf{Rearr(10)} Tiny} \\
    \cmidrule{3-5} \cmidrule{6-8} \cmidrule{9-11}
     & \textbf{Planner}          &    \textbf{Len.} &    \textbf{Time} &   \textbf{Fail} &    \textbf{Len.} &    \textbf{Time} &   \textbf{Fail} &    \textbf{Len.} &    \textbf{Time} &   \textbf{Fail} \\
    \midrule
    \multirow{5}{*}{\STAB{\rotatebox[origin=c]{90}{optimal}}} & \textbf{FD-seq-opt-lmcut} & 15.77  & 24.81  &   0.04 & \textbf{25.80}  & 104.47 &   0.55 & -      & -      &   1.00 \\
    & \textbf{SatPlan}          & 14.77  & 10.35  &   0.45 & 26.67  & 3.27   &   0.67 & -      & -      &   1.00 \\
    & \textbf{Delfi}            & 15.13  & 0.36   &   0.16 & 29.10  & 27.77  &   0.29 & -      & -      &   1.00 \\
    & \textbf{DecStar-opt-fb}   & -      & -      &   1.00 & -      & -      &   1.00 & -      & -      &   1.00 \\
    & \textbf{MCTS}             & -      & -      &   1.00 & -      & -      &   1.00 & -      & -      &   1.00 \\
    \midrule
    \multirow{6}{*}{\STAB{\rotatebox[origin=c]{90}{satisficing}}} & \textbf{FF}               & 16.71  & 0.19   &   \textbf{0.00} & 34.44  & 0.55   &   \textbf{0.00} & 162.61 & 7.04   &   0.07 \\
    & \textbf{FF-X}             & 16.71  & 0.25   &   \textbf{0.00} & 34.44  & 0.58   &   \textbf{0.00} & 162.30 & 7.39   &   0.09 \\
    & \textbf{FD-lama-first}    & 15.19  & 2.96   &   0.33 & 38.47  & 3.25   &   0.18 & 205.89 & 7.68   &   0.51 \\
    & \textbf{Cerberus-sat}     & \textbf{11.50}  & 12.00  &   0.85 & -      & -      &   1.00 & -      & -      &   1.00 \\
    & \textbf{Cerberus-agl}     & 14.77  & 5.13   &   0.45 & 33.00  & 7.30   &   0.49 & 186.07 & 9.04   &   0.73 \\
    & \textbf{DecStar-agl-fb}   & 14.72  & 2.62   &   0.55 & 34.96  & 2.58   &   0.58 & 193.00 & 6.78   &   0.85 \\
    & \textbf{BFWS}             & 15.56  & 0.90   &   0.22 & 32.16  & 0.37   &   0.18 & \textbf{160.93} & 0.57   &   0.18 \\
    & \textbf{Regression-plan}  & -      & -      &   1.00 & -      & -      &   1.00 & -      & -      &   1.00 \\
    \midrule
    \multirow{2}{*}{\STAB{\rotatebox[origin=c]{90}{learn}}} & \textbf{Relational policy~\cite{relationalpolicy}} & -      & -      &   1.00 & -      & -      &   1.00 & -      & -      &   1.00 \\
    & \textbf{PLOI~\cite{ploi}}             & 16.45  & \textbf{0.00*}   &   \textbf{0.00} & 37.04  & \textbf{0.00*}   &   \textbf{0.00} & 221.71 & \textbf{0.18}   &   \textbf{0.00} \\
    \bottomrule
    \end{tabular}
    } %
    \label{table:taskography-rearrangement-tiny}
\end{table}

\begin{table}[!h]
    \centering
    \caption{Performance of planners over the \emph{Rearrangement(k)}-Medium tasks. For all metrics, lower values indicate better performance.}
    \adjustbox{max width=\linewidth}{
    \begin{tabular}{@{}llrrrrrrrrr@{}}
    \toprule
    &  & \multicolumn{3}{c}{\textbf{Rearr(1)} Medium} & \multicolumn{3}{c}{\textbf{Rearr(2)} Medium} & \multicolumn{3}{c}{\textbf{Rearr(10)} Medium} \\
    \cmidrule{3-5} \cmidrule{6-8} \cmidrule{9-11}
     & \textbf{Planner}          &    \textbf{Len.} &    \textbf{Time} &   \textbf{Fail} &    \textbf{Len.} &    \textbf{Time} &   \textbf{Fail} &    \textbf{Len.} &    \textbf{Time} &   \textbf{Fail} \\
    \midrule
    \multirow{5}{*}{\STAB{\rotatebox[origin=c]{90}{optimal}}} & \textbf{FD-seq-opt-lmcut} &  15.53 &  19.68 &   0.06 & \textbf{27.13}  & 125.69 &   0.41 & -      & -      &   1.00 \\
    & \textbf{SatPlan}          &  14.98 &  11.98 &   0.33 & 28.23  & 5.45   &   0.50 & -      & -      &   1.00 \\
    & \textbf{Delfi}            &  15.40 &   3.62 &   0.16 & 29.13  & 12.79  &   0.28 & -      & -      &   1.00 \\
    & \textbf{DecStar-opt-fb}   &  15.42 &  41.35 &   0.93 & 28.50  & 111.53 &   0.91 & -      & -      &   1.00 \\
    & \textbf{MCTS}             & -      & -      &   1.00 & -      & -      &   1.00 & -      & -      &   1.00 \\
    \midrule
    \multirow{6}{*}{\STAB{\rotatebox[origin=c]{90}{satisficing}}} & \textbf{FF}               &  16.45 &   0.25 &   \textbf{0.00} & 32.87  & 0.41   &   \textbf{0.00} & 159.04 & 5.30   &   0.09 \\
    & \textbf{FF-X}             &  16.45 &   0.21 &   \textbf{0.00} & 32.87  & 0.45   &   \textbf{0.00} & 159.80 & 5.02   &   0.08 \\
    & \textbf{FD-lama-first}    &  15.51 &   2.48 &   0.21 & 39.20  & 2.77   &   0.20 & 208.28 & 6.35   &   0.49 \\
    & \textbf{Cerberus-sat}     &  \textbf{11.20} &  10.17 &   0.88 & -      & -      &   1.00 & -      & -      &   1.00 \\
    & \textbf{Cerberus-agl}     &  15.18 &   6.10 &   0.34 & 32.20  & 6.40   &   0.33 & 176.60 & 8.91   &   0.72 \\
    & \textbf{DecStar-agl-fb}   &  15.36 &   2.15 &   0.58 & 36.35  & 2.40   &   0.59 & 211.16 & 7.20   &   0.82 \\
    & \textbf{BFWS}             &  15.42 &   0.60 &   0.23 & 30.65  & 0.44   &   0.27 & \textbf{151.17} & 0.41   &   0.23 \\
    & \textbf{Regression-plan}  & -      & -      &   1.00 & -      & -      &   1.00 & -      & -      &   1.00 \\
    \midrule
    \multirow{2}{*}{\STAB{\rotatebox[origin=c]{90}{learn}}} & \textbf{Relational policy~\cite{relationalpolicy}} & -      & -      &   1.00 & -      & -      &   1.00 & -      & -      &   1.00 \\
    & \textbf{PLOI~\cite{ploi}}             &  16.44 &   \textbf{0.00*} &   \textbf{0.00} & 36.19  & \textbf{0.00*}   &   \textbf{0.00} & 213.43 & \textbf{0.17}   &   \textbf{0.00} \\
    \bottomrule
    \end{tabular}
    } %
    \label{table:taskography-rearrangement-medium}
\end{table}

\begin{table}[!h]
    \centering
    \caption{Performance of planners over the \emph{Courier(n, k)}-Tiny tasks. For all metrics, lower values indicate better performance.}
    \adjustbox{max width=\linewidth}{
    \begin{tabular}{@{}llrrrrrrrrrrrr@{}}
    \toprule
    &  & \multicolumn{3}{c}{\textbf{Cour(3, 10)} Tiny} & \multicolumn{3}{c}{\textbf{Cour(5, 10)} Tiny} & \multicolumn{3}{c}{\textbf{Cour(7, 10)} Tiny} & \multicolumn{3}{c}{\textbf{Cour(10, 10)} Tiny} \\
    \cmidrule{3-5} \cmidrule{6-8} \cmidrule{9-11} \cmidrule{12-14}
     & \textbf{Planner}          &    \textbf{Len.} &    \textbf{Time} &   \textbf{Fail} &    \textbf{Len.} &    \textbf{Time} &   \textbf{Fail} &    \textbf{Len.} &    \textbf{Time} &   \textbf{Fail} &    \textbf{Len.} &    \textbf{Time} &   \textbf{Fail} \\
    \midrule
    \multirow{6}{*}{\STAB{\rotatebox[origin=c]{90}{satisficing}}} & \textbf{FF}               & 146.35 & 7.57   &   0.13 & 136.38 & 7.97   &   0.33 & 127.88 & 6.84   &   0.55 & 124.93 & 14.62  &   0.73 \\
    & \textbf{FF-X}             & 144.80 & 8.34   &   0.11 & 137.05 & 7.49   &   0.31 & 128.42 & 8.34   &   0.53 & 126.31 & 15.21  &   0.71 \\
    & \textbf{FD-lama-first}    & 175.15 & 8.31   &   0.53 & 159.64 & 7.31   &   0.55 & 156.12 & 6.97   &   0.55 & 145.00 & 7.50   &   0.56 \\
    & \textbf{Cerberus-sat}     & -      & -      &   1.00 & -      & -      &   1.00 & -      & -      &   1.00 & -      & -      &   1.00 \\
    & \textbf{Cerberus-agl}     & \textbf{137.87} & 10.79  &   0.73 & 127.30 & 17.61  &   0.82 & 138.25 & 21.65  &   0.93 & -      & -      &   1.00 \\
    & \textbf{DecStar-agl-fb}   & 140.47 & 4.52   &   0.69 & \textbf{124.62} & 4.65   &   0.71 & \textbf{120.20} & 4.04   &   0.73 & \textbf{117.73} & 6.98   &   0.73 \\
    & \textbf{BFWS}             & 160.18 & 1.19   &   0.18 & 159.17 & 0.94   &   0.25 & 159.90 & 1.80   &   0.29 & 153.93 & 4.28   &   0.45 \\
    & \textbf{Regression-plan}  & -      & -      &   1.00 & -      & -      &   1.00 & -      & -      &   1.00 & -      & -      &   1.00 \\
    \midrule
    \multirow{2}{*}{\STAB{\rotatebox[origin=c]{90}{learn}}} & \textbf{Relational policy~\cite{relationalpolicy}} & -      & -      &   1.00 & -      & -      &   1.00 & -      & -      &   1.00 & -      & -      &   1.00 \\
    & \textbf{PLOI~\cite{ploi}}             & 193.55 & \textbf{0.22}   &   \textbf{0.00} & 179.36 & \textbf{0.26}   &   \textbf{0.00} & 172.87 & \textbf{0.37}   &   \textbf{0.00} & 167.38 & \textbf{0.71}   &   \textbf{0.00} \\
    \bottomrule
    \end{tabular}
    } %
    \label{table:taskography-courier-tiny}
\end{table}

\begin{table}[!h]
    \centering
    \caption{Performance of planners over the \emph{Courier(n, k)}-Medium tasks. For all metrics, lower values indicate better performance.}
    \adjustbox{max width=\linewidth}{
    \begin{tabular}{@{}llrrrrrrrrrrrr@{}}
    \toprule
    &  & \multicolumn{3}{c}{\textbf{Cour(3, 10)} Medium} & \multicolumn{3}{c}{\textbf{Cour(5, 10)} Medium} & \multicolumn{3}{c}{\textbf{Cour(7, 10)} Medium} & \multicolumn{3}{c}{\textbf{Cour(10, 10)} Medium} \\
    \cmidrule{3-5} \cmidrule{6-8} \cmidrule{9-11} \cmidrule{12-14}
     & \textbf{Planner}          &    \textbf{Len.} &    \textbf{Time} &   \textbf{Fail} &    \textbf{Len.} &    \textbf{Time} &   \textbf{Fail} &    \textbf{Len.} &    \textbf{Time} &   \textbf{Fail} &    \textbf{Len.} &    \textbf{Time} &   \textbf{Fail} \\
    \midrule
    \multirow{6}{*}{\STAB{\rotatebox[origin=c]{90}{satisficing}}} & \textbf{FF}               & \textbf{141.89} & 4.94   &   0.07 & 133.46 & 6.29   &   0.20 & 128.41 & 6.62   &   0.24 & 117.50 & 14.27  &   0.78 \\
    & \textbf{FF-X}             & \textbf{141.89} & 4.47   &   0.07 & 133.50 & 5.80   &   0.19 & 128.19 & 6.72   &   0.24 & 118.67 & 15.52  &   0.77 \\
    & \textbf{FD-lama-first}    & 180.38 & 7.11   &   0.40 & 166.35 & 6.27   &   0.45 & 156.34 & 4.92   &   0.29 & 141.75 & 6.80   &   0.63 \\
    & \textbf{Cerberus-sat}     & -      & -      &   1.00 & -      & -      &   1.00 & -      & -      &   1.00 & -      & -      &   1.00 \\
    & \textbf{Cerberus-agl}     & 148.41 & 10.17  &   0.74 & \textbf{133.31} & 11.50  &   0.77 & \textbf{125.73} & 12.99  &   0.83 & \textbf{109.56} & 15.58  &   0.95 \\
    & \textbf{DecStar-agl-fb}   & 154.07 & 6.45   &   0.66 & 142.42 & 4.01   &   0.61 & 132.60 & 4.50   &   0.58 & 128.58 & 7.60   &   0.70 \\
    & \textbf{BFWS}             & 151.09 & 0.60   &   0.27 & 152.61 & 0.66   &   0.20 & 152.71 & 1.13   &   0.21 & 153.02 & 2.81   &   0.30 \\
    & \textbf{Regression-plan}  & -      & -      &   1.00 & -      & -      &   1.00 & -      & -      &   1.00 & -      & -      &   1.00 \\
    \midrule
    \multirow{2}{*}{\STAB{\rotatebox[origin=c]{90}{learn}}} & \textbf{Relational policy~\cite{relationalpolicy}} & -      & -      &   1.00 & -      & -      &   1.00 & -      & -      &   1.00 & -      & -      &   1.00 \\
    & \textbf{PLOI~\cite{ploi}}             & 182.31 & \textbf{0.20}   &   \textbf{0.00} & 169.20 & \textbf{0.24}   &   \textbf{0.00} & 161.90 & \textbf{0.34}   &   \textbf{0.00} & 152.19 & \textbf{0.61}   &   \textbf{0.00} \\
    \bottomrule
    \end{tabular}
    } %
    \label{table:taskography-courier-medium}
\end{table}

\begin{table}[!h]
    \centering
    \caption{Performance of planners over the \emph{Lifted Rearrangement(k)} domains. For all metrics, lower values indicate better performance.}
    \adjustbox{max width=\linewidth}{
    \begin{tabular}{@{}llrrrrrr@{}}
    \toprule
    &  & \multicolumn{3}{c}{\textbf{Lifted Rearr(5, 5)} Tiny} & \multicolumn{3}{c}{\textbf{Lifted Rearr(5, 5)} Medium} \\
    \cmidrule{3-5} \cmidrule{6-8}
     & \textbf{Planner}          &    \textbf{Len.} &    \textbf{Time} &   \textbf{Fail} &    \textbf{Len.} &    \textbf{Time} &   \textbf{Fail} \\
    \midrule
    \multirow{6}{*}{\STAB{\rotatebox[origin=c]{90}{satisficing}}} & \textbf{FF}               & 62.86  & 3.40   &   0.47 & 61.90  & 3.04   &   0.37 \\
    & \textbf{FF-X}             & 67.88  & 3.48   &   0.89 & 61.78  & 2.30   &   0.72 \\
    & \textbf{FD-lama-first}    & 66.81  & 3.20   &   0.49 & 71.15  & 4.11   &   0.46 \\
    & \textbf{Cerberus-sat}     & -       & -      &   1.00 & -      & -      &   1.00 \\
    & \textbf{Cerberus-agl}     & 60.50  & 7.62   &   0.60 & 64.26  & 6.74   &   0.57 \\
    & \textbf{DecStar-agl-fb}   & 66.30  & 3.02   &   0.71 & 77.00  & 3.08   &   0.71 \\
    & \textbf{BFWS}             & \textbf{56.90}  & 0.94   &   0.41 & \textbf{55.36}  & 0.80   &   0.43 \\
    & \textbf{Regression-plan}  & -      & -      &   1.00 & -      & -      &   1.00 \\
    \midrule
    \multirow{2}{*}{\STAB{\rotatebox[origin=c]{90}{learn}}} & \textbf{Relational policy~\cite{relationalpolicy}} & -      & -      &   1.00 & -      & -      &   1.00 \\
    & \textbf{PLOI~\cite{ploi}}             & 78.68  & \textbf{0.22}   &   \textbf{0.24} & 76.62  & \textbf{0.22}   &   \textbf{0.24} \\
    \bottomrule
    \end{tabular}
    } %
    \label{table:taskography-lifted-rearrangement}
\end{table}

\begin{table}[!h]
    \centering
    \caption{Performance of planners over the \emph{Lifted Courier(n, k)} domains. For all metrics, lower values indicate better performance.}
    \adjustbox{max width=\linewidth}{
    \begin{tabular}{@{}llrrrrrr@{}}
    \toprule
    &  & \multicolumn{3}{c}{\textbf{Lifted Cour(5, 5)} Tiny} & \multicolumn{3}{c}{\textbf{Lifted Cour(5, 5)} Medium} \\
    \cmidrule{3-5} \cmidrule{6-8}
     & \textbf{Planner}          &    \textbf{Len.} &    \textbf{Time} &   \textbf{Fail} &    \textbf{Len.} &    \textbf{Time} &   \textbf{Fail} \\
    \midrule
    \multirow{6}{*}{\STAB{\rotatebox[origin=c]{90}{satisficing}}} & \textbf{FF}               & \textbf{57.74}  & 4.03   &   0.44 & 57.38  & 4.81   &   0.37 \\
    & \textbf{FF-X}             & 61.19  & 7.56   &   0.77 & 60.05  & 3.79   &   0.64 \\
    & \textbf{FD-lama-first}    & 61.13  & 3.34   &   0.56 & 63.19  & 3.31   &   0.45 \\
    & \textbf{Cerberus-sat}     & -       & -      &   1.00 & -      & -      &   1.00 \\
    & \textbf{Cerberus-agl}     & 59.19  & 7.05   &   0.77 & 59.61  & 7.45   &   0.68 \\
    & \textbf{DecStar-agl-fb}   & 58.75  & 4.46   &   0.71 & 63.93  & 3.85   &   0.68 \\
    & \textbf{BFWS}             & 61.92  & 2.30   &   0.43 & \textbf{56.14}  & 0.67   &   0.38 \\
    & \textbf{Regression-plan}  & -      & -      &   1.00 & -      & -      &   1.00 \\
    \midrule
    \multirow{2}{*}{\STAB{\rotatebox[origin=c]{90}{learn}}} & \textbf{Relational policy~\cite{relationalpolicy}} & -      & -      &   1.00 & -      & -      &   1.00 \\
    & \textbf{PLOI~\cite{ploi}}             & 71.71  & \textbf{0.26}   &   \textbf{0.26} & 69.92  & \textbf{0.46}   &   \textbf{0.30} \\
    \bottomrule
    \end{tabular}
    } %
    \label{table:taskography-lifted-courier}
\end{table}

\end{document}